%% file: preprint.tex
\documentclass{article}
\usepackage{iclr2027_conference,times}
\input{macros}
\title{Trapped by Their Own Rollouts:\\
Understanding Aggregation--Rollout Feedback in 
Federated On-Policy Distillation}
\iclrfinalcopy
\author{
Jinqian Chen, Jihua Zhu\thanks{Corresponding author}, Chang Liu \\
School of Software Engineering, Xi'an Jiaotong University \\
\texttt{chenjinqian@stu.xjtu.edu.cn, zhujh@xjtu.edu.cn} 
}

\begin{document}
\maketitle
\lhead{Preprint. Under review.}
\input{sections/abstract}
\input{sections/introduction}
\input{sections/theory}
\input{sections/method}
\input{sections/experiments}
\FloatBarrier
\input{sections/related_work}
\input{sections/conclusion}
\label{maintext:end}
\clearpage
\input{sections/statements}
\bibliography{references}
\bibliographystyle{iclr2027_conference}
\clearpage
\appendix
\input{appendices/notation}

\input{appendices/general_proofs}
\input{appendices/construction_proofs}
\input{appendices/intervention_proofs}
\input{appendices/algorithm_details}
\input{appendices/experimental_protocol}
\input{appendices/additional_experiments}

\input{appendices/main_results}

\input{appendices/scaleup_results}

\end{document}

%% file: macros.tex
\usepackage[T1]{fontenc}
\usepackage{amsmath,amssymb,amsthm,mathtools,bm}
\usepackage{graphicx,booktabs,array,multirow,tabularx,longtable}
\usepackage[table]{xcolor}
\usepackage{wrapfig,adjustbox,etoolbox}
\usepackage[font=small,skip=4pt]{caption}
\usepackage{algorithm,algpseudocode}
\usepackage{placeins}
\usepackage{tikz}
\usetikzlibrary{arrows.meta,positioning,calc}
\usepackage{microtype}
\usepackage{hyperref}
\usepackage{url}
\hypersetup{
  colorlinks=false,
  pdfborder={0 0 1},
  citebordercolor={0 1 0},
  linkbordercolor={0 1 0},
  pdftitle={Trapped by Their Own Rollouts},
  pdfauthor={Anonymous authors}
}
\newcommand{\method}{FedTOPS}

\newcommand{\E}{\mathbb{E}}

\newcommand{\KL}{D_{\mathrm{KL}}}
\newcommand{\TV}{\operatorname{TV}}
\newcommand{\Var}{\operatorname{Var}}
\newcommand{\ind}{\mathbf{1}}
\newcommand{\F}{\mathcal{F}_{\eta}}
\newcommand{\Pmap}{\mathcal{P}_{\eta}}
\newcommand{\cA}{\mathcal{A}}
\newcommand{\cC}{\mathcal{C}}
\newcommand{\cS}{\mathcal{S}}
\newcommand{\norm}[1]{\left\lVert#1\right\rVert}
\newcommand{\pp}{\,\mathrm{pp}}
\DeclareMathOperator*{\argmin}{arg\,min}
\newtheorem{theorem}{Theorem}
\newtheorem{proposition}[theorem]{Proposition}
\newtheorem{lemma}[theorem]{Lemma}
\newtheorem{corollary}[theorem]{Corollary}
\theoremstyle{definition}
\newtheorem{assumption}{Assumption}

\theoremstyle{remark}

\newcolumntype{Y}{>{\raggedright\arraybackslash}X}
\newcommand{\benchheader}{\rowcolor{headbg}\textbf{Method} & \textbf{MATH} & \textbf{AMC} & \textbf{Minerva} & \textbf{Olympiad} & \textbf{AIME24} & \textbf{AIME25} & \textbf{Macro}\\}

\newcommand{\seqsplit}[1]{\nolinkurl{#1}}

\definecolor{headbg}{HTML}{E8EDF2}
\definecolor{oursbg}{HTML}{E6F2F1}
\AtBeginEnvironment{table}{\small\renewcommand{\arraystretch}{1.06}\setlength{\tabcolsep}{4pt}}

\AtBeginDocument{%
  \setlength{\abovedisplayskip}{7pt plus 2pt minus 2pt}%
  \setlength{\belowdisplayskip}{7pt plus 2pt minus 2pt}%
  \setlength{\abovedisplayshortskip}{3pt plus 1pt}%
  \setlength{\belowdisplayshortskip}{5pt plus 1pt minus 1pt}}
\makeatletter
\renewcommand{\paragraph}{\@startsection{paragraph}{4}{\z@}{1.1ex plus .3ex minus .2ex}{-1em}{\normalsize\bfseries}}
\makeatother

%% file: sections/abstract.tex
\begin{abstract}
On-policy distillation (OPD) is a promising approach to language-model adaptation, aligning teacher supervision with the student's own generated trajectories. When adaptation prompts are distributed across clients, can this process benefit from federated collaboration? We study federated OPD and find that substantial collaboration gains can be obscured by learning-rate sensitivity: FedAvg can perform no better than independent local training at a small learning rate, yet recover a clear advantage at a larger rate. We explain this phenomenon through the student's dual role as learner and generator of future training data. An aggregation-induced optimization lag can delay access to useful teacher supervision, which in turn slows subsequent learning. Our theory establishes this \emph{aggregation--rollout feedback} in a solvable model with a common optimum and stable updates, and identifies two coupled roles of learning rate: learning from current supervision and reaching future supervision. Guided by this analysis, we propose \method\ (Federated Teacher-guided On-Policy Scaling), which reuses teacher feedback on current trajectories to adapt the FedAvg update magnitude under clientwise predictive-change constraints. Across six mathematical reasoning benchmarks, \method\ improves macro Avg@8 over FedAvg by 4.56--14.57 percentage points across the evaluated student models and local learning rates.
\end{abstract}

%% file: sections/introduction.tex
\section{Introduction}
\label{sec:intro}
On-policy distillation (OPD) offers an effective way to adapt language models using a stronger teacher \citep{agarwal2024gkd,gu2024minillm}. By learning from teacher feedback on its own generations, a student receives supervision on the states it actually visits. In downstream applications, however, the prompts needed for adaptation may be held by separate clients and cannot be pooled. Federated learning offers a natural way for these clients to collaborate by sharing model updates. This raises a basic question: \emph{can on-policy distillation benefit from federated collaboration?}

At first glance, this appears to be a familiar federated optimization problem: combine local updates while accounting for heterogeneous client objectives \citep{mcmahan2017fedavg,karimireddy2020scaffold}. Yet OPD gives the student a second role. It is both \emph{the model being trained} and \emph{the generator of future training data}. In conventional fixed-data federated learning, aggregation changes the learner while the available examples remain fixed. In FedOPD, it also changes the trajectories on which the teacher will provide subsequent supervision. Even when each client's prompt distribution stays fixed, the distribution of visited prefixes evolves with the student. Aggregation therefore couples progress across heterogeneous clients with changes in what those clients will learn from next.

We present, to our knowledge, the first systematic study of FedOPD under a shared teacher. We uncover a counterintuitive pattern: federation can provide essentially no gain over independent local training at a small learning rate, yet become substantially beneficial at a larger one (Figure~\ref{fig:motivation}). Learning rate therefore governs more than the pace of optimization: \textbf{it can determine whether OPD benefits from federated collaboration within a given training budget}. This sensitivity raises a question specific to the student's dual role: how does the progress made by aggregation affect the supervision available in subsequent rounds?

Our central insight is that \textbf{an aggregation-induced optimization lag can become a lag in access to future supervision}. If the aggregated student progresses more slowly toward an ability needed to reach useful teaching states, those states occur less often in its next rollouts. The student then receives less of the supervision that would support further progress. This creates \emph{aggregation--rollout feedback}: an optimization discrepancy affects the data generated next, which in turn affects subsequent learning. A shared, fixed teacher does not eliminate this feedback, because access to its supervision still depends on the student.

\begin{wrapfigure}[]{r}{0.56\textwidth}
\centering
\includegraphics[width=\linewidth]{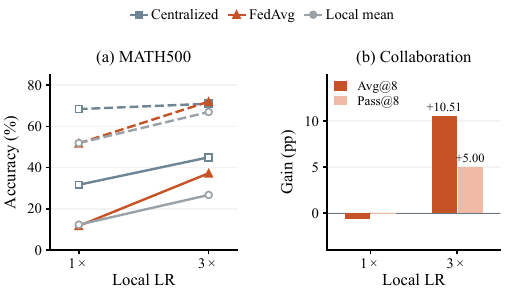}
\caption{\textbf{Learning rate changes collaboration gains.} MATH500, student: Qwen3-0.6B, teacher: Qwen3-8B, matched total rollout budgets. Left: Avg@8 (solid) and Pass@8 (dashed). Right: FedAvg minus Local mean.}
\label{fig:motivation}
\end{wrapfigure}

Our theory makes this mechanism precise. In an exactly solvable two-stage model, aggregation delays access to downstream teaching states even though all clients share a realizable teacher and a common optimum, and every local update remains stable. Changing only the rollout source isolates the resulting supervision delay. The analysis reveals two coupled roles of learning rate: \emph{learning from current supervision} and \emph{reaching future supervision}. Their interaction reproduces the observed change in the value of collaboration within a finite budget. A general update decomposition further separates ordinary optimization effects from the propagation of discrepancies through the evolving rollout distribution.

This understanding suggests calibrating the aggregate before it generates the next round's trajectories. We propose \method\ (Federated Teacher-guided On-Policy Scaling), which uses teacher feedback on current training trajectories to select how far to move along the FedAvg update. Clientwise predictive-change constraints limit the adjustment, and reusing the trajectories avoids additional rollout generation. A controlled server-scaling experiment supports update magnitude as a useful intervention. Across six mathematical reasoning benchmarks, \method\ improves macro Avg@8 over FedAvg by 4.56--14.57 percentage points across the evaluated student models and learning rates.

Our contributions are threefold:
\begin{itemize}
\item We establish FedOPD as a setting where collaboration can improve distillation, while revealing that its benefits depend strongly on the local learning rate.
\item We explain how aggregation-induced optimization lag can delay future supervision and characterize the two roles of learning rate through a solvable construction and a general feedback decomposition.
\item We introduce \method, a simple teacher-guided aggregation rule, and demonstrate improved performance across student models and local learning rates.
\end{itemize}

%% file: sections/theory.tex
\section{Phenomenon and Theoretical Analysis}
\label{sec:theory}
\label{sec:setup}
We first establish the learning-rate sensitivity of collaboration gains, then explain it through the student's coupled roles as learner and generator.

\subsection{When Does Collaboration Help?}
Figure~\ref{fig:motivation} compares FedAvg, independent Local training, and Centralized training from a common 0.6B student under matched total rollout budgets. Local mean averages three separately trained client models. At the reference learning rate, FedAvg offers essentially no collaboration gain; at the larger rate, its MATH500 Avg@8 gain over Local rises from $-0.60$ to $+10.51\pp$. The six-benchmark macro average follows the same pattern.

The key observation is that learning rate changes whether collaboration is useful within the budget. In OPD, aggregation also sets the generator of the next training trajectories. An optimization discrepancy can therefore change the supervision available in subsequent rounds.

\paragraph{FedOPD protocol.}
Clients have prompt distributions $D_i$ and positive weights $w_i$ summing to one, and share a fixed teacher $q$.
Each round starts from $\theta_t$: client $i$ generates student trajectories and minimizes
\begin{equation}
\mathcal{L}_i(\theta)
=
\frac{1}{|\mathcal{H}_i|}
\sum_{h\in\mathcal{H}_i}
D_{\mathrm{KL}}\!\left(p_\theta(\cdot\mid h)\,\|\,q(\cdot\mid h)\right),
\end{equation}
where $\mathcal{H}_i$ contains sampled response-token prefixes, held fixed during optimization.
The server aggregates the updated parameters as
$\theta_{t+1}=\sum_i w_i\theta_{t,i}$.
Independent Local training uses the same objective but retains each client's learner and generator without aggregation.

\paragraph{One model, two roles.}
Distinguish the learner $\pi_\theta$ from the generator $\pi_\phi$. Let $d_i^\phi$ be its prefix distribution, averaging positions up to horizon $L$ with zero-loss padding after termination (Appendix~\ref{app:general}). The fixed-source loss and training gradient are
\begin{equation}
L_i(\theta;\phi)=\E_{s\sim d_i^\phi}[\ell(\theta;s)],
\qquad g_i(\theta;\phi)=\nabla_\theta L_i(\theta;\phi),
\label{eq:twoarg}
\end{equation}
where $\ell$ is a conditional distillation loss, such as forward or reverse token KL. On-policy rounds start at $\phi=\theta$, but the training field $g_i^\circ(\theta)=g_i(\theta;\theta)$ does not differentiate through the sampled-prefix distribution.

\paragraph{Theoretical protocol.}
We analyze $E$ population gradient steps of size $\eta$, freezing the source within each round. Let $\F(\theta;\phi)$ denote local steps followed by averaging, and $\Pmap(\theta;\phi)$ denote $E$ synchronous steps on $\sum_iw_iL_i(\theta;\phi)$. This pooled reference isolates the effect of local optimization followed by aggregation. Appendix~\ref{app:general} states the regularity assumptions.

\subsection{How Optimization Lag Delays Supervision}
\label{sec:construction}
Consider a prerequisite ability $u$ that controls access to teaching states and a downstream ability $v$ learned there. A two-stage Gaussian autoregressive model realizes this setting exactly (Appendix~\ref{app:construction}). For client features $x_i>0$, let $h_i=x_i^2$, the access threshold be $c<b$, and the precision be $\lambda>0$. The access probability is
\begin{equation}
\rho_i(u)=\Phi\big(x_i(u-c)\big),
\label{eq:rho}
\end{equation}
where $\Phi$ is the standard normal CDF. Summing the two conditional distillation losses gives exactly
\begin{equation}
L_i((u,v);\phi)
=\tfrac{h_i}{2}(u-b)^2
+\tfrac{\lambda\rho_i(\phi_u)}{2}(v-1)^2.
\label{eq:construction_loss}
\end{equation}
Forward and reverse conditional KL agree here. All clients share a realizable teacher and optimum $(b,1)$, and every fixed-source objective is a convex quadratic. Thus $u$ controls access to supervision for $v$, without conflicting client targets.

Let $u_0<c<b$, $0\le v_0<1$, and $0<\eta\max_i\{h_i,\lambda\}<1$. Define
\begin{equation}
 r_\eta=\sum_iw_i(1-\eta h_i)^E,\qquad
 s_\eta(u)=\sum_iw_i(1-\eta\lambda\rho_i(u))^E.
\label{eq:rs}
\end{equation}
Exact local updates followed by averaging satisfy
\begin{equation}
 b-u_{t+1}^F=r_\eta(b-u_t^F),\qquad
 1-v_{t+1}^F=s_\eta(u_t^F)(1-v_t^F).
\label{eq:exact_recursion}
\end{equation}
For synchronous pooled updates, the corresponding factors are
$r_\eta^P=(1-\eta\bar h)^E$ and
$s_\eta^P(u)=(1-\eta\lambda\bar\rho(u))^E$, with weighted means $\bar h$ and $\bar\rho$.

\begin{theorem}[Stable optimization can delay future teaching]
\label{thm:bottleneck}
For the construction above and $E\ge2$, $u_t^F\le u_t^P$ and $v_t^F\le v_t^P$ at every round. Define a source-swapped process that retains federated learner updates but generates each round with $u_t^P$ instead of $u_t^F$. Then
\begin{equation}
 v_T^F\le v_T^{\mathrm{swap}}\le v_T^P,
\qquad
\log\frac{1-v_T^F}{1-v_T^{\mathrm{swap}}}
=\sum_{t=0}^{T-1}\int_{u_t^F}^{u_t^P}\kappa_\eta(z)\,\mathrm dz,
\label{eq:mediation}
\end{equation}
where $\kappa_\eta(u)=-\partial_u\log s_\eta(u)>0$. If each $\rho_i$ is made model-independent, the source-swap difference is exactly zero. Under the original Gaussian access law, FedAvg, pooled, and independent Local iterates converge monotonically to the common optimum.
\end{theorem}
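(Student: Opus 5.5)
The proof works entirely with the exact scalar recursions \eqref{eq:exact_recursion} and their pooled counterparts. Everything reduces to comparing the contraction factors $r_\eta, r_\eta^P$ and $s_\eta(u), s_\eta^P(u)$ and tracking how $u_t$ feeds into the $v$-factor.

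\textbf{Step 1: Jensen comparisons.} For $E\ge2$ the map $a\mapsto(1-\eta a)^E$ is convex on $[0,1/\eta]$, and the stability condition keeps every $\eta h_i$ and every $\eta\lambda\rho_i$ in $[0,1)$. Jensen's inequality then gives $r_\eta\ge r_\eta^P$ and $s_\eta(u)\ge s_\eta^P(u)$ pointwise in $u$. All these factors lie in $[0,1)$. Since $b-u_0>0$, we get
\[
b-u_t^F=r_\eta^t(b-u_0)\ge (r_\eta^P)^t(b-u_0)=b-u_t^P,
\]
so $u_t^F\le u_t^P$, and both sequences increase monotonically to $b$.

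\textbf{Step 2: ordering the $v$-iterates.} Each $\rho_i$ is increasing in $u$ because $x_i>0$. Hence $s_\eta$ and $s^P_\eta$ are nonincreasing in $u$, and $\kappa_\eta>0$, since $\partial_u s_\eta=-\sum_i w_iE(1-\eta\lambda\rho_i)^{E-1}\eta\lambda\rho_i'<0$ and $s_\eta>0$. The gaps factor as products:
\[
1-v_T^F=\prod_t s_\eta(u_t^F)(1-v_0),\qquad
1-v_T^{\rm swap}=\prod_t s_\eta(u_t^P)(1-v_0),\qquad
1-v_T^P=\prod_t s^P_\eta(u_t^P)(1-v_0).
\]
Here I take the swapped process to use the federated factor $s_\eta$, because it keeps the federated learner, but evaluated at $u_t^P$. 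Monotonicity together with $u_t^F\le u_t^P$ gives $s_\eta(u_t^F)\ge s_\eta(u_t^P)$. Jensen gives $s_\eta(u_t^P)\ge s^P_\eta(u_t^P)$. Multiplying these nonnegative factors yields $v_T^F\le v_T^{\rm swap}\le v_T^P$, and applying the same chain for each $t$ gives $v_t^F\le v_t^P$.

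\textbf{Step 3: the mediation identity.} Take logarithms of the ratio of products:
\[
\log\frac{1-v_T^F}{1-v_T^{\rm swap}}=\sum_t\bigl[\log s_\eta(u_t^F)-\log s_\eta(u_t^P)\bigr].
\]
Each summand equals $\int_{u_t^F}^{u_t^P}\kappa_\eta(z)\,dz$ by the fundamental theorem of calculus, which applies because $s_\eta$ is smooth and positive. If instead each $\rho_i$ is a constant independent of the model, then $s_\eta$ is constant and $\kappa_\eta\equiv0$. The difference is then zero; equivalently, the two products coincide term by term.

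\textbf{Step 4: convergence.} For $u$, every process, including Local with factor $(1-\eta h_i)^E$ per client, contracts geometrically and monotonically to $b$. For $v$, each factor lies in $(0,1)$, so $v_t$ increases monotonically and stays below $1$. To show the limit is $1$, note that $u_t\to b>c$, so eventually $u_t\ge(b+c)/2$. From then on $\rho_i(u_t)\ge\Phi(x_i(b-c)/2)>0$, so the factors are bounded by some $\bar s<1$ and $\prod_t s_t\to0$. The Local client $i$ uses the factor $(1-\eta\lambda\rho_i(u_t^{(i)}))^E$, and the same argument applies to it.

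\textbf{Main obstacle.} The algebra is straightforward, so the delicate step is Step 2. It requires the definition of the swapped process to be pinned down so that it uses the federated averaged factor $s_\eta$ rather than $s^P_\eta$, and it requires both monotonicity in $u$ and Jensen to be applied to the same quantity. Both steps also need the stability condition, which guarantees that the base $1-\eta\lambda\rho_i\in(0,1)$ keeps every factor positive and convexity valid.
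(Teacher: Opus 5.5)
Your proposal is correct and follows the paper's proof essentially step for step. Both use Jensen on $z\mapsto(1-\eta z)^E$ to get $r_\eta\ge r_\eta^P$ and $s_\eta\ge s_\eta^P$, monotonicity of $s_\eta$ in $u$ to chain $s_\eta(u_t^F)\ge s_\eta(u_t^P)\ge s_\eta^P(u_t^P)$, the log-ratio identity with the fundamental theorem of calculus for the mediation term, and geometric contraction for convergence. The only small difference is in the last step: instead of waiting until $u_t\ge(b+c)/2$, the paper uses the uniform bound $s_\eta(u_t^F)\le s_\eta(u_0)<1$ from the first round, which holds because $u_t^F\ge u_0$ and $\rho_i(u_0)>0$.
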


\paragraph{Why this is an additional OPD cost.}
Jensen's inequality gives $r_\eta\ge r_\eta^P$: averaging delays prerequisite learning, reducing $\rho_i$ and hence downstream supervision. For $E=2$, the one-round prerequisite lag is exactly $\eta^2\Var_w(h_i)(b-u)$. Changing only the source in Eq.\eqref{eq:mediation} isolates the additional rollout-mediated cost. This is a finite-budget bottleneck under stable updates, even though all processes converge to the same target.

\subsection{The Two Roles of Learning Rate}
\label{sec:clock}
Learning rate affects both movement toward the teacher and the states available for subsequent learning. The residual downstream error makes these two roles explicit:
\begin{equation}
1-v_T^F=(1-v_0)e^{-\cC_T(\eta)},\qquad
\cC_T(\eta)=-\sum_{t=0}^{T-1}\log s_\eta(u_t^F(\eta)).
\label{eq:clock}
\end{equation}
The quantity $\cC_T$ is an \emph{effective supervision clock}: it depends on both the update size and the history of visited teaching states. Its derivative separates these effects:
\begin{equation}
\frac{\mathrm d\cC_T}{\mathrm d\eta}
=\underbrace{-\sum_t\partial_\eta\log s_\eta(u_t^F)}_{\text{learning on the same prefixes}}
+\underbrace{\sum_t\kappa_\eta(u_t^F)\frac{\mathrm du_t^F}{\mathrm d\eta}}_{\text{accessing different supervision}}.
\label{eq:clock_derivative}
\end{equation}
Both terms are nonnegative in the stable construction; the second disappears under a common source schedule independent of $\eta$. The rate also determines whether substantial supervision is reached within the training budget; Appendix~\ref{app:clock} gives the exact access-time formula.

\begin{wrapfigure}{r}{0.52\textwidth}
\centering
\includegraphics[width=\linewidth]{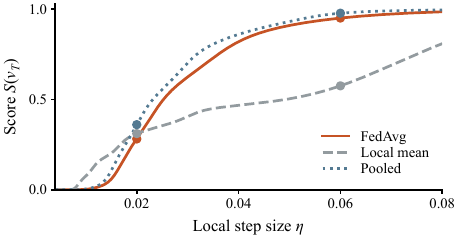}
\caption{\textbf{A sign reversal with stable updates.} Exact recurrences; marked step sizes 0.02 and 0.06. Details: Appendix~\ref{app:numerics}.}
\label{fig:toy}
\end{wrapfigure}
\paragraph{An explicit sign reversal against Local.}
Figure~\ref{fig:toy} instantiates the exact recurrences with two heterogeneous clients and evaluates downstream learning through $S(v)=1-(1-v)^2$ on a common fixed teaching state. At the smaller step size, FedAvg scores below the mean of independently trained Local models; tripling the step size reverses this ordering. All updates remain stable. Swapping the source alone recovers most of the gap to pooled training at both step sizes. This reproduces the change in collaboration gain and isolates a contribution from delayed supervision. Appendix~\ref{app:numerics} gives parameters and scores.

\paragraph{What changes when supervision is fixed?}
Replace $\rho_i(u)$ by client-specific constants while retaining the protocol and score $S(v)$. Corollary~\ref{cor:static_local} gives a nonnegative FedAvg-minus-Local gain at every stable step size. Thus fixed curvature heterogeneity alone cannot explain this construction's negative gain: model-dependent supervision access is essential.

\subsection{Separating the Two Effects in General}
For a general OPD update, let $H_i=\partial_\theta g_i$ and $B_i=\partial_\phi g_i$. The on-policy derivative is $Dg_i^\circ=H_i+B_i$: $H_i$ describes learning at a fixed source, while $B_i$ couples teacher gradients to earlier generation decisions. The latter has no predetermined sign. Appendix~\ref{app:general} derives its score-function expression.

\begin{proposition}[Creation and propagation of a discrepancy]
\label{prop:decomposition}
Under the regularity conditions of Appendix~\ref{app:general}, for fixed $E$ and a frozen source $\phi$,
\begin{equation}
\F(\theta;\phi)-\Pmap(\theta;\phi)
=\binom E2\eta^2\sum_iw_i(H_i-H)(g_i-g)+O(\eta^3),
\label{eq:local_gap}
\end{equation}
where $g_i,H_i$ are evaluated at $(\theta;\phi)$, $g=\sum_iw_ig_i$, and $H=\sum_iw_iH_i$. For the self-generated processes, write $\delta_t=\theta_t^F-\theta_t^P$ and $e_t=(\F-\Pmap)(\theta_t^P;\theta_t^P)$. Then locally
\begin{equation}
\delta_{t+1}
=\big[I-\eta E(H_t+B_t)\big]\delta_t+e_t
+O\!\left(\eta^2\norm{\delta_t}+\norm{\delta_t}^2\right).
\label{eq:propagation}
\end{equation}
The matrices in~Eq.\eqref{eq:propagation} are evaluated at the synchronous reference, and $B_t=\sum_iw_iB_{i,t}$.
\end{proposition}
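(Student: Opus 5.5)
The plan is to Taylor-expand both maps in $\eta$ for fixed $E$. First I handle the local gap \eqref{eq:local_gap}, then I linearize the self-generated recursion to get \eqref{eq:propagation}. Throughout, I use the regularity conditions of Appendix~\ref{app:general}: $g_i$ is $C^2$ in $(\theta,\phi)$ with bounded derivatives on a neighborhood. This makes every remainder uniform.

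\textbf{Step 1: expand the $E$-step maps.} Fix $\phi$ and write $\theta^{(k)}_i$ for client $i$ after $k$ steps from $\theta$. Induction on $k$ gives
\begin{equation*}
\theta^{(k)}_i=\theta-k\eta g_i+\binom k2\eta^2H_ig_i+O(\eta^3).
\end{equation*}
The induction step writes $g_i(\theta^{(k)}_i;\phi)=g_i+H_i(\theta^{(k)}_i-\theta)+O(\eta^2)$ and sums the resulting linear terms $\sum_{j<k}j=\binom k2$. Averaging with weights $w_i$ gives
\begin{equation*}
\F(\theta;\phi)=\theta-E\eta g+\binom E2\eta^2\sum_iw_iH_ig_i+O(\eta^3).
\end{equation*}
The same computation applied to the pooled objective, whose gradient is $g$ and Hessian is $H$, gives
\begin{equation*}
\Pmap(\theta;\phi)=\theta-E\eta g+\binom E2\eta^2Hg+O(\eta^3).
\end{equation*}
Subtracting, the second-order term is $\sum_iw_iH_ig_i-Hg$. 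Since $\sum_iw_i=1$, this equals $\sum_iw_i(H_i-H)(g_i-g)$, by expanding the product and noting that the cross terms cancel. This proves \eqref{eq:local_gap}.

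\textbf{Step 2: propagation.} By definition $\theta^F_{t+1}=\F(\theta^F_t;\theta^F_t)$ and $\theta^P_{t+1}=\Pmap(\theta^P_t;\theta^P_t)$. I add and subtract $\F(\theta^P_t;\theta^P_t)$, so that
\begin{equation*}
\delta_{t+1}=\big[\F(\theta^F_t;\theta^F_t)-\F(\theta^P_t;\theta^P_t)\big]+e_t.
\end{equation*}
The bracket is the increment of the on-policy map $\theta\mapsto\F(\theta;\theta)$. A first-order Taylor expansion at $\theta^P_t$ gives
\begin{equation*}
\big[\partial_\theta\F+\partial_\phi\F\big]\delta_t+O(\norm{\delta_t}^2).
\end{equation*}
To evaluate the Jacobians, I differentiate the expansion of Step 1. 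The $\theta$-derivative is $I-E\eta H+O(\eta^2)$, and the $\phi$-derivative is $-E\eta B+O(\eta^2)$. Here $B=\sum_iw_iB_i$, because $\phi$ enters only through each $g_i$ and the first-order term is $-E\eta\sum_iw_ig_i$. Both are evaluated at $(\theta^P_t;\theta^P_t)$, which gives $H_t$ and $B_t$. This yields \eqref{eq:propagation} with remainder $O(\eta^2\norm{\delta_t}+\norm{\delta_t}^2)$.

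\textbf{Main obstacle.} The delicate point is justifying that the $O(\eta^2)$ remainder survives differentiation. The Jacobians of the $E$-step map must themselves expand as $I-E\eta(H+B)+O(\eta^2)$ uniformly. Formally differentiating an asymptotic expansion is not automatic. I would avoid it by working with the chain rule directly: $\partial\theta^{(k+1)}_i=\partial\theta^{(k)}_i-\eta\,[H_i(\theta^{(k)}_i)\partial\theta^{(k)}_i+B_i(\theta^{(k)}_i)]$, using bounded second derivatives of $g_i$. Inducting on $k\le E$ then gives Jacobian $I-k\eta(H_i+B_i)+O(\eta^2)$, with the error constant depending only on $E$ and the regularity bounds. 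This also requires the iterates to stay in the neighborhood where those bounds hold, which follows for small $\eta$ since each step moves $O(\eta)$.
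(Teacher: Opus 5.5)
Your proposal is correct and follows the paper's proof essentially step for step: the inductive second-order expansion of the local iterates, weighted averaging, and the covariance identity establish \eqref{eq:local_gap}; then you add and subtract $\F(\theta_t^P;\theta_t^P)$ and linearize the diagonal map using $DF^\circ=I-E\eta(H+B)+O(\eta^2)$ to obtain \eqref{eq:propagation}. Your chain-rule induction on the Jacobians of the local iterates is the ``differentiate the smooth update composition'' justification the paper invokes, which avoids differentiating an uncontrolled big-$O$ remainder.
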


Eq.\eqref{eq:local_gap} creates an ordinary aggregation discrepancy, which vanishes at $E=1$. Eq.\eqref{eq:propagation} propagates it through both learning and generation. A common external source removes the first-order $B_t\delta_t$ contribution, motivating the source intervention in Section~\ref{sec:mechanism_experiments}.

\paragraph{From explanation to intervention.}
The aggregate determines both the next learner and its training trajectories. We therefore use current teacher feedback to calibrate the aggregate update magnitude, addressing the coupled roles of learning rate without changing the update direction.

%% file: sections/method.tex
\section{Teacher-Guided Magnitude Selection}
\label{sec:method}
The analysis suggests a practical design principle: calibrate the aggregated update before it becomes the source of the next training trajectories. Moving farther along a useful update can improve the current student and advance its access to future supervision. Whether a larger move remains useful, however, depends on the current model and local updates. \method\ (Federated Teacher-guided On-Policy Scaling) therefore uses the teacher to select the update magnitude in each round.

\subsection{From the Analysis to an Aggregation Rule}
Let $\bar\theta_t$ be the FedAvg result and $\Delta_t=\bar\theta_t-\theta_t$ its increment from the round's starting model. The two-stage construction gives a concrete justification for scaling this increment. A constant multiplier satisfying
\begin{equation}
1\le a\le\min\!\left\{\frac1{1-r_\eta},\frac1{1-(1-\eta\lambda)^E}\right\},
\label{eq:safe_scaling}
\end{equation}
keeps updates non-overshooting and both coordinates no worse than FedAvg. Moreover, the frozen-source teacher loss decreases along the safe interval (Appendix~\ref{app:intervention}). Thus teacher feedback on the current trajectories can help select an update that also advances future learning.

\method\ implements this principle with a small set of candidates:
\begin{equation}
\theta_t(a)=\theta_t+a\Delta_t,
\qquad a\in\cA=\{0.5,1,2,3,5,10\}.
\label{eq:candidates}
\end{equation}
Here $\theta$ contains the trainable LoRA coordinates. The local learning rate $\eta$ determines the client updates; the server multiplier $a$ scales their aggregate, with $a=1$ recovering FedAvg. All candidates follow the same increment.

\subsection{Selecting an Update from Current Teacher Feedback}
\paragraph{Reuse the training trajectories.}
Before local optimization, each client caches a small sample of the round's student-generated prefixes and their teacher distributions. Every candidate is evaluated on this same cache. This gives a common basis for comparison without generating additional trajectories or querying the teacher for each candidate.

Let $\widetilde q$, $\widetilde\pi_t$, and $\widetilde\pi_{t,a}$ denote the cached teacher, starting-student, and candidate distributions, respectively. The tilde denotes a shared vocabulary partition, specified below. We measure teacher agreement and predictive change by
\begin{equation}
\widehat J_t(a)=\sum_iw_i\widehat\E_i
 \KL(\widetilde q\|\widetilde\pi_{t,a}),
\qquad
\widehat K_{i,t}(a)=\widehat\E_i
 \KL(\widetilde\pi_t\|\widetilde\pi_{t,a})\le\delta\quad\forall i.
\label{eq:selector}
\end{equation}
Here $\widehat\E_i$ averages cached positions on client $i$, and $\delta$ is a predictive-change budget shared across clients within a run. The forward KL in $\widehat J_t$ favors candidates closer to the teacher. The constraint is imposed separately on each client, so a small weighted-average change cannot conceal a large change on one client's trajectories.

\paragraph{Choose the magnitude.}
Among feasible candidates, \method\ selects the smallest teacher score. FedAvg is the default when feasible and is replaced only by a strictly better score. If no candidate satisfies the constraint, the server retains the round's starting model. Algorithm~\ref{alg:fedtops} in Appendix~\ref{app:algorithm} summarizes one round and the implementation details.

\paragraph{A compact scoring cache.}
Each client caches four training rollouts and at most 16 evenly spaced prediction positions per rollout. The vocabulary partition is the teacher/starting-student top-16 union plus the remaining probability mass, fixed across candidates. Cache details and the full-distribution analysis appear in Appendices~\ref{app:algorithm} and~\ref{app:intervention}.

\subsection{Connection to Learning-Rate Robustness}
Magnitude selection can offset local learning-rate changes when these mainly rescale the update. For $\Delta(\eta)=\eta d+O(\eta^2)$, write $\beta=a\eta$. The local quadratic teacher-score problem under the predictive-change constraint yields a preferred effective step $\beta^\star$, and hence $a^\star\simeq\beta^\star/\eta$ (Appendix~\ref{app:intervention}). This compensates update scale while preserving the FedAvg direction.

%% file: sections/experiments.tex
\section{Experiments}
\label{sec:experiments}
We evaluate collaboration gains across learning rates and student scales (RQ1), adaptive versus fixed aggregate scaling (RQ2), and rollout-source effects on subsequent learning (RQ3).

\input{tables/main_avg}
\input{tables/main_pass}

\subsection{Experimental Setup}
\label{sec:experimental_setup}
\paragraph{Models and federated task.}
We distill Qwen3-8B into Qwen3-0.6B-Base and Qwen3-1.7B-Base students \citep{yang2025qwen3}. Three clients each hold 570 MATH prompts \citep{hendrycks2021math} from disjoint subject groups: foundational algebra, discrete mathematics, and advanced algebra. All clients participate in 50 rounds, generating 32 responses each per round. We aggregate rank-8 LoRA factor coordinates. The 0.6B local learning rates are $3\times10^{-6}$ (1$\times$LR) and $9\times10^{-6}$ (3$\times$LR); 1.7B uses the reference rate. Each configuration uses one training seed.

\paragraph{Comparisons and evaluation.}
Local trains clients independently; Centralized pools their prompts. FedAvg averages updates; fixed $a=3$/$a=10$ amplify this increment every round without selection or a trust constraint. \method\ selects its magnitude (Algorithm~\ref{alg:fedtops}), with trust budgets 0.05 for all student models. All trained methods share initialization. Each Local model uses 1,600 rollouts; each federated or Centralized run uses 4,800 system-wide. The adapter-free 0.6B Base is evaluated once.
We sample eight answers per problem on MATH500, AMC23, Minerva, OlympiadBench, and AIME24/25 \citep{hendrycks2021math,lewkowycz2022minerva,he2024olympiadbench}. Avg@8 measures average correctness; Pass@8 measures whether any answer is correct. Macro weights benchmarks equally; Local mean averages client-model scores. Gains are percentage points (pp). Appendices~\ref{app:protocol} and~\ref{app:mainresults} give settings and client-level results.

\subsection{Recovering Collaboration Across Learning Rates and Student Scales}
\label{sec:mainresults}
\paragraph{Weak FedAvg performance conceals substantial collaboration gains.}
At the reference learning rate, FedAvg reaches only 3.91\% macro Avg@8, close to the untrained student's 3.32\% and below Local's 4.13\% (Table~\ref{tab:main}). With the same local learning rate, \method\ reaches 16.84\%, improving on FedAvg by $12.93\pp$ and exceeding the Centralized reference. This recovery extends across all six benchmarks. Pass@8 improves from 19.17\% to 34.69\% (Table~\ref{tab:main_pass}), showing gains in both average correctness and the probability of finding a correct response. Weak FedAvg performance need not imply little potential for collaboration.

\paragraph{The advantage persists at the larger learning rate.}
At 3$\times$LR, FedAvg itself gains from collaboration, but \method\ further improves macro Avg@8 by $4.56\pp$ and Pass@8 by $3.88\pp$. Its macro Avg@8 is similar at the two evaluated settings: 16.84\% and 17.33\%, compared with 3.91\% and 12.78\% for FedAvg. The $0.49\pp$ versus $8.87\pp$ gap describes these two configurations. Table~\ref{tab:main} shows positive collaboration gains on every benchmark at both rates, although \method\ ties Centralized on Minerva and trails it on AIME25 at the larger rate.

\paragraph{The benefit extends to a larger student.}
With the 1.7B student, \method\ improves macro Avg@8 over FedAvg by $14.57\pp$, reaching the Centralized reference, and achieves the highest Pass@8 among the compared methods (Tables~\ref{tab:main} and~\ref{tab:main_pass}). Its cap-hit rate falls from FedAvg's 38.66\% to 9.89\%, accompanying the accuracy improvement (Appendix~\ref{app:scaleup}). This experiment tests transfer across student scales at the reference learning rate.

\subsection{Aggregate Scaling and Adaptive Selection}
\label{sec:ablation}
\paragraph{Fixed amplification recovers much of the gain.}
Tables~\ref{tab:main} and~\ref{tab:main_pass} include fixed $a=3$ and $a=10$ at both 0.6B learning rates. At 1$\times$LR, increasing the server multiplier raises macro Avg@8 from FedAvg's 3.91\% to 10.81\% and 16.19\%, respectively, without changing local optimization. Thus a simple magnitude correction already recovers much of the collaboration gain.

\paragraph{The effective multiplier changes with the training setting.}
The preference between fixed rules reverses at 3$\times$LR (Figure~\ref{fig:fixed_scaling}): $a=3$ reaches 16.40\% macro Avg@8, whereas $a=10$ falls to 11.32\%, below FedAvg's 12.78\%. Always taking the largest candidate is therefore insufficient. \method\ achieves the highest macro Avg@8 in both settings, at 16.84\% and 17.33\%. It does not dominate every metric: fixed $a=10$ has higher macro Pass@8 at 1$\times$LR (34.99\% versus 34.69\%), and fixed $a=3$ does at 3$\times$LR (35.96\% versus 35.50\%). These controls motivate selecting the magnitude from current teacher feedback.

\begin{figure}[!t]
\centering
\includegraphics[width=\linewidth]{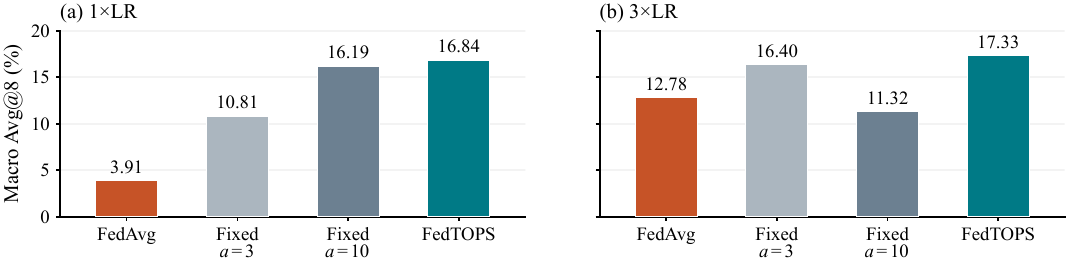}
\caption{\textbf{Fixed amplification versus adaptive selection.} Six-benchmark macro Avg@8 for the 0.6B student. The stronger fixed rule changes from $a=10$ to $a=3$ as the local learning rate increases; \method\ performs well in both settings. Tables~\ref{tab:main} and~\ref{tab:main_pass} give benchmark-level accuracy.}
\label{fig:fixed_scaling}
\end{figure}

\begin{figure}[!t]
\centering
\includegraphics[width=\linewidth]{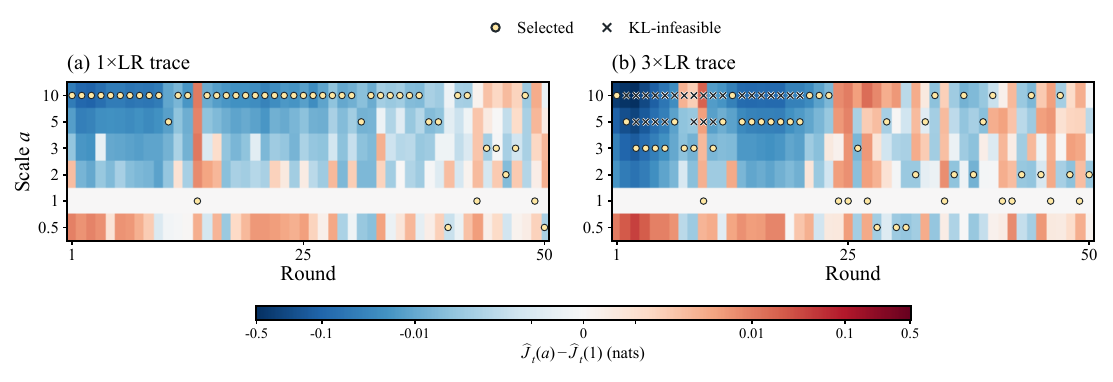}
\caption{\textbf{Teacher-guided selection across rounds.} Candidate-score differences $\widehat J_t(a)-\widehat J_t(1)$ on a shared symmetric-log scale: blue is better, circles are selected, crosses are infeasible. }
\label{fig:selection}
\end{figure}

\paragraph{Selection changes across rounds.}
The reference-rate trace selects $a=10$ in 37 of 50 rounds, but also selects every other candidate at least once. The higher-rate trace uses $a=10$ in 10 rounds and smaller scales in the remaining 40. Within each run, Figure~\ref{fig:selection} shows how teacher scores and feasibility vary as training evolves. Frequent upper-bound selections coexist with rounds that favor smaller steps, consistent with the need to reassess magnitude during training.

\paragraph{Computational cost.}
\method\ reuses training trajectories and adds forward evaluations for caching and candidate scoring. Recorded caching and round-finalization time accounts for 8.44--15.17\% of the training-progress interval across the reported runs. Finalization includes scoring and saving; this fraction is not a measured slowdown relative to FedAvg. Appendix~\ref{app:efficiency} gives the timing breakdown and forward-pass accounting.

\subsection{Rollout Sources and Subsequent Learning}
\label{sec:mechanism_experiments}
The theory identifies the rollout source as a mediator between aggregation and future learning. Two receiving students start from the same round-9 checkpoint and train at the same local learning rate. Their data come from sources $\phi_1=\theta_t+\Delta_t$ and $\phi_3=\theta_t+3\Delta_t$, constructed from the same round-10 increment. The intervention changes the generator supplying training data while holding the receiving students' starting point fixed (Appendix~\ref{app:source_protocol}).

From a common start of 8.97\% Avg@8 and 46.20\% Pass@8, source $a=3$ yields a receiver scoring 10.17\% and 53.00\%; source $a=1$ yields 9.18\% and 44.60\% (Figure~\ref{fig:source_intervention}). Changing the source changes subsequent learning from the same receiving model, supporting the proposed rollout-mediated mechanism.

\begin{figure}[!t]
\centering
\includegraphics[width=.86\linewidth]{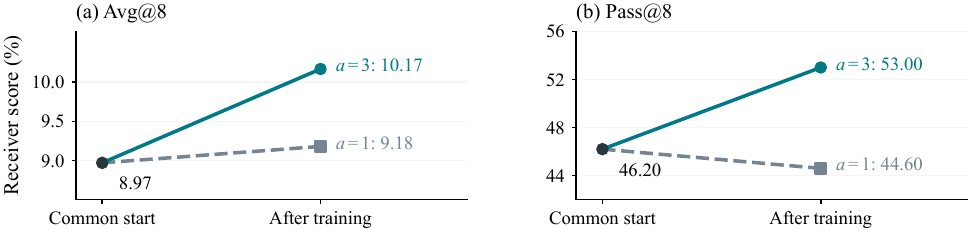}
\caption{\textbf{Rollout source changes subsequent learning.} MATH500 scores before and after short receiver training with source $a=1$ or $a=3$. Segments connect the common start to the two measured endpoints; they are not intermediate training trajectories. Both branches share the initial checkpoint and local learning rate.}
\label{fig:source_intervention}
\end{figure}

%% file: tables/main_avg.tex
\begin{table}[!t]
\centering\small
\caption{\textbf{Avg@8 across benchmarks, learning rates, and student scales.} Scores are percentages; Macro weights all six benchmarks equally. Best/second-best scores within each setting are bold/underlined. Base is repeated for reference; Olympiad denotes OlympiadBench.}
\label{tab:main}
\renewcommand{\arraystretch}{1.06}
\setlength{\tabcolsep}{4pt}
\begin{adjustbox}{max width=\linewidth}
\begin{tabular}{lrrrrrrr}
\toprule
\rowcolor{headbg}
\textbf{Method} & \textbf{MATH500} & \textbf{AMC23} & \textbf{Minerva} & \textbf{Olympiad} & \textbf{AIME24} & \textbf{AIME25} & \textbf{Macro}\\
\midrule
\multicolumn{8}{l}{\textit{0.6B student, 1$\times$LR}}\\
Base student & 9.07 & 4.06 & 2.85 & 3.12 & 0.42 & 0.42 & 3.32\\
Centralized & 31.70 & 16.88 & 7.95 & 10.31 & 0.42 & \underline{0.83} & 11.35\\
Local mean & 12.35 & 3.96 & 3.57 & 4.32 & 0.42 & 0.14 & 4.13\\
FedAvg & 11.75 & 3.44 & 3.95 & 4.30 & 0.00 & 0.00 & 3.91\\
Fixed $a=3$ & 31.80 & 14.37 & 8.04 & 9.40 & 0.42 & \underline{0.83} & 10.81\\
Fixed $a=10$ & \underline{44.12} & \underline{21.25} & \underline{12.87} & \textbf{15.56} & \textbf{2.08} & \textbf{1.25} & \underline{16.19}\\
\rowcolor{oursbg}\textbf{FedTOPS} & \textbf{45.98} & \textbf{24.06} & \textbf{13.83} & \underline{15.50} & \underline{0.83} & \underline{0.83} & \textbf{16.84}\\
\midrule
\multicolumn{8}{l}{\textit{0.6B student, 3$\times$LR}}\\
Base student & 9.07 & 4.06 & 2.85 & 3.12 & 0.42 & 0.42 & 3.32\\
Centralized & 45.02 & \underline{20.94} & \underline{13.83} & 15.34 & \underline{1.67} & \textbf{2.08} & \underline{16.48}\\
Local mean & 26.77 & 13.12 & 7.08 & 8.04 & 1.11 & 0.69 & 9.47\\
FedAvg & 37.28 & 16.56 & 9.56 & 11.18 & \underline{1.67} & 0.42 & 12.78\\
Fixed $a=3$ & \underline{45.20} & 20.00 & \textbf{14.38} & \underline{15.49} & \underline{1.67} & \underline{1.67} & 16.40\\
Fixed $a=10$ & 32.40 & 14.69 & 8.73 & 10.00 & 0.83 & 1.25 & 11.32\\
\rowcolor{oursbg}\textbf{FedTOPS} & \textbf{47.02} & \textbf{22.81} & \underline{13.83} & \textbf{17.41} & \textbf{2.08} & 0.83 & \textbf{17.33}\\
\midrule
\multicolumn{8}{l}{\textit{1.7B student, 1$\times$LR}}\\
Centralized & \underline{62.10} & \textbf{34.69} & \underline{23.30} & \underline{27.17} & \textbf{6.67} & \underline{3.33} & \underline{26.21}\\
Local mean & 29.41 & 15.62 & 10.09 & 11.99 & 2.92 & 1.39 & 11.90\\
FedAvg & 28.82 & 15.94 & 10.20 & 11.29 & 2.50 & 1.67 & 11.74\\
\rowcolor{oursbg}\textbf{FedTOPS} & \textbf{62.38} & \underline{33.75} & \textbf{23.71} & \textbf{27.58} & \underline{6.25} & \textbf{4.17} & \textbf{26.31}\\
\bottomrule\end{tabular}\end{adjustbox}\end{table}

%% file: tables/main_pass.tex
\begin{table}[!t]
\centering\small
\caption{\textbf{Pass@8 across benchmarks, learning rates, and student scales.} Scores are percentages. Settings, abbreviations, and highlighting follow Table~\ref{tab:main}.}
\label{tab:main_pass}
\renewcommand{\arraystretch}{1.06}
\setlength{\tabcolsep}{4pt}
\begin{adjustbox}{max width=\linewidth}
\begin{tabular}{lrrrrrrr}
\toprule
\rowcolor{headbg}
\textbf{Method} & \textbf{MATH500} & \textbf{AMC23} & \textbf{Minerva} & \textbf{Olympiad} & \textbf{AIME24} & \textbf{AIME25} & \textbf{Macro}\\
\midrule
\multicolumn{8}{l}{\textit{0.6B student, 1$\times$LR}}\\
Base student & 46.80 & 22.50 & 16.54 & 16.91 & 3.33 & \underline{3.33} & 18.24\\
Centralized & 68.40 & 50.00 & 26.47 & 33.68 & 3.33 & \textbf{6.67} & 31.43\\
Local mean & 51.93 & 22.50 & 18.14 & 22.55 & 2.22 & 1.11 & 19.74\\
FedAvg & 51.80 & 22.50 & 18.75 & 21.96 & 0.00 & 0.00 & 19.17\\
Fixed $a=3$ & 69.00 & 42.50 & 28.31 & 31.90 & 3.33 & \textbf{6.67} & 30.28\\
Fixed $a=10$ & \underline{71.40} & \underline{52.50} & \underline{30.15} & \underline{35.91} & \textbf{13.33} & \textbf{6.67} & \textbf{34.99}\\
\rowcolor{oursbg}\textbf{FedTOPS} & \textbf{72.40} & \textbf{57.50} & \textbf{31.62} & \textbf{36.65} & \underline{6.67} & \underline{3.33} & \underline{34.69}\\
\midrule
\multicolumn{8}{l}{\textit{0.6B student, 3$\times$LR}}\\
Base student & 46.80 & 22.50 & 16.54 & 16.91 & 3.33 & 3.33 & 18.24\\
Centralized & 71.00 & \underline{52.50} & 31.25 & \underline{36.65} & \underline{6.67} & \underline{10.00} & 34.68\\
Local mean & 67.00 & 45.00 & 24.88 & 30.42 & \textbf{7.78} & 4.44 & 29.92\\
FedAvg & 72.00 & 45.00 & 29.04 & 33.68 & \underline{6.67} & 3.33 & 31.62\\
Fixed $a=3$ & \textbf{73.00} & \textbf{55.00} & \underline{31.99} & 35.76 & \underline{6.67} & \textbf{13.33} & \textbf{35.96}\\
Fixed $a=10$ & 69.80 & 42.50 & 29.04 & 32.49 & 3.33 & 6.67 & 30.64\\
\rowcolor{oursbg}\textbf{FedTOPS} & \underline{72.20} & \textbf{55.00} & \textbf{32.72} & \textbf{39.76} & \underline{6.67} & 6.67 & \underline{35.50}\\
\midrule
\multicolumn{8}{l}{\textit{1.7B student, 1$\times$LR}}\\
Centralized & \textbf{84.20} & \underline{60.00} & \underline{40.44} & \underline{50.74} & \underline{13.33} & \underline{13.33} & \underline{43.67}\\
Local mean & 75.20 & 55.83 & 35.17 & 40.26 & \textbf{14.44} & 10.00 & 38.48\\
FedAvg & 76.20 & 50.00 & 34.19 & 39.47 & 10.00 & 10.00 & 36.64\\
\rowcolor{oursbg}\textbf{FedTOPS} & \underline{83.60} & \textbf{67.50} & \textbf{41.54} & \textbf{52.82} & \underline{13.33} & \textbf{23.33} & \textbf{47.02}\\
\bottomrule\end{tabular}\end{adjustbox}\end{table}

%% file: sections/related_work.tex
\section{Related Work}
\label{sec:related}
\paragraph{Federated low-rank adaptation.}
FFA-LoRA identifies a mismatch between jointly optimizing LoRA factors locally and averaging them separately, motivating a frozen-factor design \citep{sun2024ffa}. FLoRA uses stacking to aggregate heterogeneous adapters \citep{wang2024flora}, while FedEx-LoRA incorporates the aggregation residual into frozen weights \citep{singhal2025fedexlora}. These works address the representation of aggregated updates. FedOPD introduces a further coupling: the aggregated student generates the next training trajectories. We study how optimization discrepancies affect this evolving supervision and use teacher feedback to calibrate the existing update.

\paragraph{On-policy distillation and model-dependent data.}
Learning from states visited by the learner is central to interactive imitation learning \citep{ross2011dagger} and language-model OPD \citep{agarwal2024gkd,gu2024minillm}. Recent work studies teacher compatibility \citep{li2026rethinking}, entropy-aware supervision \citep{jin2026entropy}, data selection \citep{hou2026what}, and the distinction between state coverage and supervision absorption \citep{fu2026oneexample}. More broadly, performative prediction and performative FL formalize model--data feedback \citep{perdomo2020performative,jin2024performative}; federated SARSA studies evolving policies in heterogeneous environments \citep{zhang2024fedsarsa,mangold2025fedsarsa}. Our analysis identifies a specific consequence for FedOPD: aggregation lag can delay access to supervision, giving learning rate two coupled roles in finite-budget learning.

\paragraph{Federated optimization and extrapolation.}
FedAvg combines local updates \citep{mcmahan2017fedavg}, and SCAFFOLD addresses heterogeneous client drift \citep{karimireddy2020scaffold}. FedExP selects a server step size from client-update geometry \citep{jhunjhunwala2023fedexp}. In centralized OPD, EffOPD extrapolates checkpoint displacements using validation feedback \citep{cai2026foresee}. \method\ selects the magnitude of the current FedAvg increment using cached teacher distributions and clientwise predictive-change constraints. Its frozen-prefix score and bounded predictive change are related to trust-region policy optimization \citep{schulman2015trpo}. The motivation here is the effect of aggregation on both current progress and subsequent supervision.

\section{Limitations}
Our theory explains aggregation–rollout feedback without quantitatively predicting LLM dynamics. Experiments are limited to mathematical reasoning, three fully participating clients, one model family, and one training seed per configuration. FedTOPS scales LoRA factor updates and uses cached teacher agreement as a proxy for future learning benefits. Inexact LoRA aggregation and nonlinear weight-space extrapolation lie outside our analysis; integrating aggregation corrections, more OPD objectives and exploring broader settings remain future work.

%% file: sections/conclusion.tex
\section{Conclusion}
\label{sec:conclusion}
Federated on-policy distillation couples model aggregation with the generation of future training data. We find that this coupling can make the benefit of collaboration strongly sensitive to learning rate. Our analysis explains how an aggregation-induced optimization lag can delay access to teacher supervision, giving learning rate a second role beyond learning from the current data. This insight motivates \method, a simple teacher-guided adjustment of the FedAvg update magnitude that improves performance across the evaluated student models and learning rates. More broadly, effective aggregation in FedOPD must account for both the progress made by the current update and the learning opportunities created by the resulting student.

%% file: sections/statements.tex
\subsection*{AI Use Statement}
We used generative AI tools to polish an existing author-written draft, improving clarity, readability, and linguistic expression. We also used AI tools to assist with code development. The authors reviewed and verified all AI-assisted text and code and take full responsibility for the final content of the paper.

\subsection*{Reproducibility Statement}
Appendices~\ref{app:general}--\ref{app:intervention} provide assumptions and complete proofs. The source package includes theory verification, figure-generation code, machine-readable aggregate transcriptions, and table-generation code. Appendix~\ref{app:protocol} specifies the training and evaluation protocol, Appendix~\ref{app:additional} reports supplementary analyses, and Appendix~\ref{app:mainresults} gives complete benchmark results. The bundled experimental figures and tables can be rebuilt from the supplied data and reported summaries. We also upload the code as the supplementary materials for reproduction.

%% file: appendices/notation.tex
\section{Notation and Organization of the Analysis}
\label{app:notation}
We distinguish the receiving learner $\theta$ from the trajectory generator $\phi$ throughout the analysis. Table~\ref{tab:notation} summarizes the notation. Appendix~\ref{app:general} derives the update decomposition and propagation of learner--generator discrepancies. Appendix~\ref{app:construction} gives the solvable model, its exact recurrences, and the source intervention. Appendix~\ref{app:intervention} analyzes aggregate scaling and the relation between full-distribution and finite-cache trust constraints.

\begin{table}[htbp]
\centering
\caption{Main notation. $\Pmap$ denotes the matched-depth synchronous reference used in the analysis; Centralized denotes the pooled-data training baseline in the experiments.}
\label{tab:notation}
\begin{tabularx}{\linewidth}{lY}
\toprule

\rowcolor{headbg}\textbf{Symbol} & \textbf{Meaning}\\
\midrule
$K,w_i,D_i$ & Number of clients, aggregation weight, and fixed prompt distribution\\
$q,\pi_\theta,\pi_\phi$ & Fixed teacher, receiving learner, and trajectory generator\\
$L,d_i^\phi$ & Maximum horizon and uniformly position-sampled prefix distribution\\
$L_i(\theta;\phi),g_i(\theta;\phi)$ & Fixed-source loss and its learner gradient\\
$H_i,B_i$ & Learner response and generator response of the update vector field\\
$E,\eta,T$ & Local gradient steps per round, step size, and communication rounds\\
$\F,\Pmap$ & Local-then-average map and matched-depth synchronous map\\
$\delta_t,e_t$ & Cross-process state gap and one-round algorithm discrepancy\\
$u,v$ & Prerequisite and downstream parameters in the construction\\
$h_i,b,c,\lambda$ & Client curvature, shared prerequisite target, access threshold, and downstream precision\\
$\rho_i(u)$ & Probability of reaching a downstream teaching state\\
$r_\eta,s_\eta(u)$ & Exact prerequisite and downstream residual factors\\
$\kappa_\eta,\cC_T$ & Access sensitivity and effective supervision clock\\
$\Delta_t,a,\delta$ & FedAvg increment, server multiplier, and predictive KL budget\\
\bottomrule
\end{tabularx}
\end{table}

%% file: appendices/general_proofs.tex
\section{General mechanism: assumptions and complete derivations}
\label{app:general}
\subsection{The learner and source are different arguments}
For a prompt $x$ and a length-$L$ padded trajectory, write
\begin{equation}
P_i^\phi(x,y_{1:L})=D_i(x)\prod_{h=1}^{L}\pi_\phi(y_h\mid x,y_{<h}).
\label{eq:pathlaw}
\end{equation}
After termination, both policies use a common absorbing symbol, and the corresponding losses and score derivatives are zero. Drawing $h$ uniformly from $\{1,\ldots,L\}$ and retaining $(x,y_{<h},h)$ defines $d_i^\phi$. For continuous outputs, products and expectations refer to densities and integrals. The fixed-source objective is
\begin{equation}
L_i(\theta;\phi)=\frac1L\sum_{h=1}^{L}\E_{P_i^\phi}\ell(\theta;x,y_{<h},h).
\end{equation}
The conditional loss can be either exact forward or reverse KL; the following derivations only require the stated regularity. In particular,
\begin{equation}
\nabla_\theta[L_i(\theta;\theta)]
=g_i(\theta;\theta)+\left.\partial_\phi L_i(\theta;\phi)\right|_{\phi=\theta}.
\label{eq:diagonalrisk}
\end{equation}
Frozen-prefix training uses the first term. It does not generally optimize the diagonal risk by its full gradient.

\begin{assumption}[Local regularity and round protocol]
\label{ass:regularity}
The client set and positive weights are fixed. Each round uses all clients, a common starting learner, and $E$ exact gradient steps with one frozen source per client. There is a compact parameter neighborhood containing all points used in the local expansions and their intervening line segments. The functions $g_i(\theta;\phi)$ have bounded continuous derivatives through order two there, with locally Lipschitz second derivatives. The prefix laws have common support on the nonabsorbing states. A dominating integrable function justifies differentiating each expectation with respect to the learner and source. The bounds hold uniformly for the sufficiently small step sizes under consideration and for fixed $E$.
\end{assumption}

This is a sufficient local assumption, not a global boundedness assertion for neural-network KL losses. In the Gaussian construction all relevant expectations are analytic and these bounds hold on compact parameter sets. The fixed teacher supplies no parameter-dependent derivative.

\subsection{Proof of the learner--generator decomposition}
At the learner--source pair, define $H_i=\partial_\theta g_i$ and $B_i=\partial_\phi g_i$. The on-policy chain rule gives
\begin{equation}
Dg_i^\circ(\theta)=H_i(\theta;\theta)+B_i(\theta;\theta).
\label{eq:HB}
\end{equation}
For $s_h=(x,y_{<h},h)$ and $z_\theta(s)=\nabla_\theta\ell(\theta;s)$, the source response is
\begin{equation}
B_i(\theta;\theta)=\frac1L\sum_{h=1}^{L}\E\!\left[
 z_\theta(s_h)
 \left(\sum_{j<h}\nabla_\theta\log\pi_\theta(y_j\mid x,y_{<j})\right)^{\!\top}
\right].
\label{eq:score_response}
\end{equation}

Let $z_\theta(s)=\nabla_\theta\ell(\theta;s)$. At a fixed learner,
\begin{align}
\partial_\phi g_i(\theta;\phi)
&=\frac1L\sum_h\int z_\theta(s_h)
    [\nabla_\phi P_i^\phi(s_h)]^\top\,\mathrm ds_h\\
&=\frac1L\sum_h\E_{P_i^\phi}
 \left[z_\theta(s_h)\big(\nabla_\phi\log P_i^\phi(s_h)\big)^\top\right].
\end{align}
The prompt distribution has no parameter dependence, so
\begin{equation}
\nabla_\phi\log P_i^\phi(s_h)
=\sum_{j<h}\nabla_\phi\log\pi_\phi(y_j\mid x,y_{<j}).
\end{equation}
Evaluation at $\phi=\theta$ proves Eq.\eqref{eq:score_response}. Applying the chain rule to $g_i(\theta;\theta)$ gives $Dg_i^\circ=H_i+B_i$. The first prediction position has no earlier generated decisions, so its source-score term is zero. None of these expressions implies that $B_i$ is positive semidefinite, symmetric, or harmful.

\subsection{Proof of the fixed-source aggregation discrepancy}
Fix $(\theta,\phi)$ and suppress those arguments in $g_i$ and $H_i$. Let $\theta_{i,0}=\theta$ and
\begin{equation}
\theta_{i,e+1}=\theta_{i,e}-\eta g_i(\theta_{i,e};\phi).
\end{equation}
For fixed $E$, bounded gradients give $\theta_{i,e}-\theta=O(\eta)$. Induction and Taylor expansion yield
\begin{equation}
\theta_{i,e}=\theta-e\eta g_i+\binom e2\eta^2H_i g_i+O(\eta^3).
\label{eq:localexpansion}
\end{equation}
Indeed, substituting the induction hypothesis into
$g_i(\theta_{i,e};\phi)=g_i+H_i(\theta_{i,e}-\theta)+O(\eta^2)$
adds $e\eta^2H_i g_i$ to the second-order term, and $\binom e2+e=\binom{e+1}2$. Averaging Eq.\eqref{eq:localexpansion} at $e=E$ gives
\begin{equation}
\F(\theta;\phi)=\theta-E\eta g+\binom E2\eta^2\sum_iw_iH_i g_i+O(\eta^3).
\end{equation}
The identical expansion for $E$ steps on the pooled frozen-source gradient gives
\begin{equation}
\Pmap(\theta;\phi)=\theta-E\eta g+\binom E2\eta^2Hg+O(\eta^3).
\end{equation}
Finally,
\begin{equation}
\sum_iw_i(H_i-H)(g_i-g)=\sum_iw_iH_i g_i-Hg,
\end{equation}
which proves Eq.\eqref{eq:local_gap}. For $E=1$, both maps are exactly $\theta-\eta\sum_iw_i g_i(\theta;\phi)$, so equality is exact rather than merely second-order. If all client gradient functions agree, the maps also agree for every $E$. Identical optima alone do not imply identical gradient functions.

\paragraph{Why the source term does not appear here.}
Within this protocol, $\phi$ is fixed while each local learner changes. Consequently the within-round curvature is $H_i$, not $H_i+B_i$. Replacing it by $H_i+B_i$ would analyze a different algorithm that refreshes the source within the local loop. Client drift itself is not a new OPD-specific result \citep{karimireddy2020scaffold}; the next derivation concerns its additional propagation channel.

\subsection{Proof of the cross-round propagation formula}
Write the diagonal maps as $F^\circ(\theta)=\F(\theta;\theta)$ and $P^\circ(\theta)=\Pmap(\theta;\theta)$. The smooth local-step composition and Assumption~\ref{ass:regularity} imply the derivative expansion
\begin{equation}
DF^\circ(\theta)=I-E\eta(H+B)+O(\eta^2),
\label{eq:mapderivative}
\end{equation}
where the remainder is uniform on the neighborhood. This follows either by differentiating the smooth update composition, or by applying Eq.\eqref{eq:localexpansion} with its uniform differentiated remainder. It does not rely on differentiating an uncontrolled big-$O$ bound.

At $\theta_t^P$, add and subtract $F^\circ(\theta_t^P)$:
\begin{align}
\delta_{t+1}
&=F^\circ(\theta_t^P+\delta_t)-F^\circ(\theta_t^P)
  +F^\circ(\theta_t^P)-P^\circ(\theta_t^P)\\
&=DF^\circ(\theta_t^P)\delta_t+e_t+O(\norm{\delta_t}^2).
\end{align}
Substitution of Eq.\eqref{eq:mapderivative} proves Eq.\eqref{eq:propagation}.

If both processes instead use the same externally specified source $\phi_t$, differentiation of $\F(\theta;\phi_t)$ gives $I-E\eta H_t+O(\eta^2)$. The first-order generator contribution is absent because the source does not move with the receiving learner. Static optimization discrepancies, finite-sample noise, and effects of different optimizer states need not disappear. The proposition is a local decomposition, not a global contraction or divergence theorem.

\subsection{A general bound on capability-specific supervision}
\begin{lemma}[Access-limited movement]
\label{lem:access}
Let $v$ be a block of the learner parameters and $\cS$ a set of prefixes. Suppose, throughout the relevant local trajectories,
\begin{equation}
\norm{\nabla_v\ell(\theta;s)}
\le G\ind_{s\in\cS}+\epsilon\ind_{s\notin\cS},\qquad 0\le\epsilon\le G.
\end{equation}
If $\rho_i(\phi_t)=\Pr_{d_i^{\phi_t}}(s\in\cS)$ and the source is fixed within each round, population local gradient descent followed by averaging satisfies
\begin{equation}
\norm{v_T-v_0}\le\eta E\sum_{t=0}^{T-1}\sum_iw_i
\left[\epsilon+(G-\epsilon)\rho_i(\phi_t)\right].
\label{eq:accessbound}
\end{equation}
\end{lemma}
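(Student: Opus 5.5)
The plan is to telescope the $v$-block across rounds and local steps and bound each increment with the triangle inequality. Write $v_{t,i,e}$ for the $v$-block of client $i$'s learner after $e$ local steps in round $t$, with $v_{t,i,0}=v_t$. Averaging is linear and acts coordinatewise, so the $v$-block of the aggregate satisfies
\begin{equation}
v_{t+1}-v_t=\sum_iw_i\,(v_{t,i,E}-v_t)
=-\eta\sum_iw_i\sum_{e=0}^{E-1}\nabla_v L_i(\theta_{t,i,e};\phi_t),
\end{equation}
where $\nabla_v L_i$ denotes the $v$-block of $g_i$. Summing over $t$ and applying the triangle inequality gives
\begin{equation}
\norm{v_T-v_0}\le\eta\sum_{t=0}^{T-1}\sum_iw_i\sum_{e=0}^{E-1}\norm{\nabla_vL_i(\theta_{t,i,e};\phi_t)}.
\end{equation}
It therefore suffices to bound each local population gradient block by $\epsilon+(G-\epsilon)\rho_i(\phi_t)$, uniformly in $e$.

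The single-step bound rests on the fact that the source is frozen within the round, so the prefix law is $d_i^{\phi_t}$ for every local step. Then $\nabla_vL_i(\theta;\phi_t)=\E_{s\sim d_i^{\phi_t}}[\nabla_v\ell(\theta;s)]$. Assumption~\ref{ass:regularity}'s dominating integrable function justifies exchanging the gradient and the expectation. Jensen's inequality for the norm then gives
\begin{equation}
\norm{\nabla_vL_i}\le\E\norm{\nabla_v\ell(\theta;s)}\le G\Pr(s\in\cS)+\epsilon\Pr(s\notin\cS)=\epsilon+(G-\epsilon)\rho_i(\phi_t).
\end{equation}
Here the pointwise hypothesis is applied at the intermediate learners $\theta_{t,i,e}$, which the statement covers through the phrase ``throughout the relevant local trajectories.'' Summing $E$ identical bounds yields the factor $E$, which gives Eq.\eqref{eq:accessbound}.

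The main obstacle I expect is justifying that the hypothesis holds at every intermediate local iterate, and that the interchange of differentiation and expectation is valid there. I would handle both by invoking the compact neighborhood in Assumption~\ref{ass:regularity} together with the stated trajectory-wide hypothesis, and I would not attempt to derive them independently. I would also note two points about scope. First, $\epsilon\le G$ makes the bound monotone in $\rho_i$. Second, no convexity or step-size condition is required, because the argument never uses contraction, only the triangle inequality.
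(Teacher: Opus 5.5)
Your proposal is correct and follows essentially the same route as the paper's proof. Both apply Jensen's inequality for the norm under the frozen source to bound each local gradient block by $\epsilon+(G-\epsilon)\rho_i(\phi_t)$, sum $E$ steps, bound the weighted average by convexity of the norm (your linearity-plus-triangle-inequality step), and finish with the triangle inequality over rounds.
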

\begin{proof}
For any local learner iterate, Jensen's inequality for the norm gives
$\norm{\nabla_v L_i(\theta;\phi_t)}\le\epsilon+(G-\epsilon)\rho_i(\phi_t)$.
Summing $E$ steps bounds the local movement. The convexity of the norm bounds the aggregated movement by the weighted local bounds, and the triangle inequality over rounds proves the claim.
\end{proof}
At $\epsilon=0$, no capability-related movement can be arbitrarily large without accumulating visits to the teaching states. With nonnegative server multipliers, the same argument inserts $a_t$ in each round's bound. The lemma does not apply unchanged to arbitrary momentum states. Its prefix-gradient assumption is capability-specific: it does not state that all incorrect or long trajectories are uninformative.

\subsection{An elementary long-horizon extension}
Let a trajectory-level teaching event be exactly $\cS=\bigcap_{h=1}^{H}A_h$, where $A_h$ are specified successive events. Define $p_h^j=\Pr_j(A_h\mid A_1,\ldots,A_{h-1})$ for $j\in\{F,P\}$, with positive conditioning probabilities and $p_h^P>0$. If $p_h^F\le r p_h^P$ for every $h$, with $0<r<1$, then the chain rule gives
\begin{equation}
\frac{\Pr_F(\cS)}{\Pr_P(\cS)}
=\prod_{h=1}^{H}\frac{p_h^F}{p_h^P}\le r^H.
\label{eq:complete_event_ratio}
\end{equation}
If the numerator is zero, the bound is immediate without defining conditionals beyond a zero-probability prefix. The equality requires a \emph{complete} event characterization: if $\cS$ is only a subset of $\bigcap_h A_h$, an additional conditional success factor is needed, and the prerequisite inequalities alone do not imply Eq.\eqref{eq:complete_event_ratio}.

With $N$ independent trajectories from a fixed generator, the probability of no visit to this event is $(1-\Pr(\cS))^N$. This illustrates how horizon and sampling budget can regulate rare-supervision access. It does not imply domination of all state occupancies or that long reasoning is inherently harmful.

\subsection{Relation to practical OPD implementations}
An exact conditional reverse KL fits Eq.\eqref{eq:twoarg}. A clipped importance-weighted surrogate can additionally depend explicitly on the behavior policy through its likelihood ratios, saved advantages, and clipping sets. Its source derivative must include those dependencies, and may require piecewise or generalized derivatives at clipping boundaries. Persistent Adam states require augmenting the state space beyond $\theta$. Likewise, if the rollout sampler uses temperature or top-$p$ truncation, the source-score expression must use that actual sampling law; the raw model softmax is not automatically the generator distribution. The controlled population experiments isolate the mechanism before these implementation effects are introduced.

%% file: appendices/construction_proofs.tex
\section{A solvable on-policy construction and its consequences}
\label{app:construction}
\subsection{Deriving the objective from a shared conditional teacher}
A client prompt is the deterministic feature $x_i>0$. The teacher is a single conditional function of $(x,Y)$, not a different target chosen independently for each client. At the first prediction it uses $\mathcal N(xb,1)$, while the student uses $\mathcal N(xu,1)$. At the second prediction, if $Y\ge xc$, the teacher uses $\mathcal N(1,\lambda^{-1})$ and the student uses $\mathcal N(v,\lambda^{-1})$. Otherwise both use a common distribution independent of $(u,v)$, for example $\mathcal N(0,\lambda^{-1})$.

For equal-variance Gaussians,
\begin{equation}
\KL\big(\mathcal N(m_1,\sigma^2)\|\mathcal N(m_2,\sigma^2)\big)
=\frac{(m_1-m_2)^2}{2\sigma^2}.
\end{equation}
The first-position conditional loss is $x_i^2(u-b)^2/2$. Under a source $\phi$, the probability of the second teaching state is
\begin{equation}
\Pr(Y\ge x_i c\mid Y\sim\mathcal N(x_i\phi_u,1))
=\Phi(x_i(\phi_u-c)).
\end{equation}
The second-position conditional loss, whenever reached, is $\lambda(v-1)^2/2$. Summing both positions proves Eq.\eqref{eq:construction_loss}. This convention differs from the position average in Appendix~\ref{app:general} by the constant factor two; the step size absorbs that factor. Equality of forward and reverse KL here is an equality of the \emph{conditional} losses. It does not assert equality of the two joint trajectory KLs evaluated under different source laws.

For every finite $\phi_u$, $\rho_i(\phi_u)>0$. Thus the frozen-source Hessian is diagonal with positive entries $h_i$ and $\lambda\rho_i(\phi_u)$, and the unique optimum is $(b,1)$ for every client. There is no teacher incompatibility and no conflict between client optima. The threshold is fixed in the generated variable; the expected objective is smooth in its parameter arguments.

\subsection{Exact iterates for all four processes}
Let $h_i=x_i^2$, $u_0<c<b$, $v_0\in[0,1)$, and
\begin{equation}
0<\eta\max\{h_1,\ldots,h_K,\lambda\}<1.
\label{eq:stablecondition}
\end{equation}
With the round-$t$ source fixed, the local gradient is
\begin{equation}
g_i((u,v);\phi_t)=\big(h_i(u-b),\lambda\rho_i(\phi_{t,u})(v-1)\big)^\top.
\end{equation}
Starting at $(u_t,v_t)$, the $E$-step residuals on client $i$ are exactly
\begin{equation}
 b-u_{i,E}=(1-\eta h_i)^E(b-u_t),\qquad
 1-v_{i,E}=(1-\eta\lambda\rho_i(\phi_{t,u}))^E(1-v_t).
\label{eq:exactlocal}
\end{equation}
For FedAvg, $\phi_{t,u}=u_t^F$. Averaging Eq.\eqref{eq:exactlocal} gives Eq.\eqref{eq:exact_recursion} and hence
\begin{equation}
 u_t^F=b-(b-u_0)r_\eta^t,\qquad
 1-v_T^F=(1-v_0)\prod_{t=0}^{T-1}s_\eta(u_t^F).
\label{eq:closedFed}
\end{equation}
For the synchronous reference, each of the $E$ gradient steps uses the mixed frozen-source loss. Its exact factors are $r_\eta^P$ and $s_\eta^P$ in the main text, so
\begin{equation}
 u_t^P=b-(b-u_0)(r_\eta^P)^t,\qquad
 1-v_T^P=(1-v_0)\prod_{t=0}^{T-1}s_\eta^P(u_t^P).
\label{eq:closedPooled}
\end{equation}
Independent Local training has
\begin{align}
 u_{i,t}^L&=b-(b-u_0)(1-\eta h_i)^{Et},\label{eq:local_u}\\
 1-v_{i,T}^L&=(1-v_0)\prod_{t=0}^{T-1}
 (1-\eta\lambda\rho_i(u_{i,t}^L))^E.\label{eq:local_v}
\end{align}
Finally, the source-swapped federated learner retains the $E$-step local updates and averaging, but uses the predetermined $u_t^P$ source schedule. The generator's second coordinate is irrelevant to the distribution of prefixes at the second prediction. Therefore
\begin{equation}
1-v_T^{\mathrm{swap}}=(1-v_0)\prod_{t=0}^{T-1}s_\eta(u_t^P).
\label{eq:closedSwap}
\end{equation}
The swapped experiment changes supervision generation, not the receiving learner's number of updates or its update rule.

\subsection{Proof of Theorem~\ref{thm:bottleneck}}
\paragraph{Prerequisite lag.}
For $E\ge2$, the second derivative of $f(z)=(1-\eta z)^E$ is
$E(E-1)\eta^2(1-\eta z)^{E-2}\ge0$ on the stable interval. Jensen's inequality yields
\begin{equation}
r_\eta=\sum_iw_i f(h_i)\ge f\left(\sum_iw_i h_i\right)=r_\eta^P.
\end{equation}
Both factors lie in $(0,1)$, so Eq.\eqref{eq:closedFed}--Eq.\eqref{eq:closedPooled} imply $u_t^F\le u_t^P$. For $E=2$ the Jensen gap is exact:
\begin{equation}
r_\eta-r_\eta^P
=\eta^2\left[\sum_iw_i h_i^2-\left(\sum_iw_i h_i\right)^2\right]
=\eta^2\Var_w(h_i).
\end{equation}
From a common starting point $u$, the one-round prerequisite difference is therefore $\eta^2\Var_w(h_i)(b-u)$.

\paragraph{The two downstream costs.}
The Gaussian CDF has derivative
\begin{equation}
\rho_i'(u)=\frac{x_i}{\sqrt{2\pi}}
\exp\!\left[-\frac{x_i^2(u-c)^2}{2}\right]>0.
\label{eq:rho_derivative}
\end{equation}
Consequently $s_\eta(u)$ decreases with $u$. Jensen's inequality applied to $\rho_i(u)$ also gives
\begin{equation}
s_\eta(u_t^F)\ \ge\ s_\eta(u_t^P)\ \ge\ s_\eta^P(u_t^P).
\label{eq:two_costs}
\end{equation}
All factors and the initial residual $1-v_0$ are positive. Multiplying over $t$ proves
$v_T^F\le v_T^{\mathrm{swap}}\le v_T^P$.
The first comparison changes only the source; the second compares local-then-average and synchronous learning from the same source schedule.

\paragraph{Exact source attribution.}
Direct differentiation gives
\begin{equation}
\kappa_\eta(u)
=\frac{E\eta\lambda\sum_iw_i\rho_i'(u)
  (1-\eta\lambda\rho_i(u))^{E-1}}{s_\eta(u)}>0.
\label{eq:kappa_expanded}
\end{equation}
Taking the logarithm of the ratio of Eq.\eqref{eq:closedFed} and~Eq.\eqref{eq:closedSwap} yields
\begin{align}
\log\frac{1-v_T^F}{1-v_T^{\mathrm{swap}}}
&=\sum_t\big[\log s_\eta(u_t^F)-\log s_\eta(u_t^P)\big]\\
&=\sum_t\int_{u_t^F}^{u_t^P}\kappa_\eta(z)\,\mathrm dz.
\end{align}
If each $\rho_i$ is replaced by a model-independent constant, $s_\eta$ no longer depends on $u$, $\kappa_\eta=0$, and the source-swapped and original federated $v$ iterates coincide exactly. Constants can still differ across clients, so ordinary Jensen discrepancies between local and synchronous learning can remain.

\paragraph{Monotonic convergence, not divergence.}
Since $0<r_\eta<1$, $u_t^F$ increases to $b$. Because $u_t^F\ge u_0$,
\begin{equation}
0<s_\eta(u_t^F)\le s_\eta(u_0)<1.
\end{equation}
The uniform upper bound follows from $\rho_i(u_0)>0$. Hence
$0\le1-v_T^F\le(1-v_0)s_\eta(u_0)^T\to0$.
The same argument applies to the pooled and Local recurrences and also to the swapped process. No step overshoots its shared target under Eq.\eqref{eq:stablecondition}. This proves all statements of Theorem~\ref{thm:bottleneck}.

\subsection{The coupling is present without an unstable eigenvalue}
The exact federated map has Jacobian
\begin{equation}
D\F(u,v)=
\begin{pmatrix}
r_\eta & 0\\
-(1-v)s_\eta'(u) & s_\eta(u)
\end{pmatrix}.
\label{eq:triangularJac}
\end{equation}
Both eigenvalues lie in $(0,1)$, and the lower-left entry is nonnegative when $v\le1$. An upstream lag can therefore affect downstream learning without an eigenvalue exceeding one. The matrix need not be normal or a Euclidean contraction; monotonic convergence above was established directly, rather than inferred from a local eigenvalue calculation. At the gradient level, the source response is particularly transparent:
\begin{equation}
H_i=\begin{pmatrix}h_i&0\\0&\lambda\rho_i(u)\end{pmatrix},\qquad
B_i=\begin{pmatrix}0&0\\\lambda\rho_i'(u)(v-1)&0\end{pmatrix}.
\end{equation}
The generator term transmits prerequisite differences to the downstream coordinate.

\subsection{A fixed-source contrast against independent Local training}
\begin{corollary}[No negative gain in the constant-access counterfactual]
\label{cor:static_local}
Replace the access laws in Eq.\eqref{eq:construction_loss} by client-specific constants $p_i\in[0,1]$, held fixed across rounds, learning rates, and training processes. Under Eq.\eqref{eq:stablecondition}, use the same $E,T,\eta,w_i$, and initial downstream parameter $v_0$ for FedAvg and independent Local training. On the common evaluation score $S(v)=1-(1-v)^2$,
\begin{equation}
S(v_T^F)-\sum_iw_iS(v_{i,T}^L)
=(1-v_0)^2\left[
\sum_iw_i\zeta_i^{2T}-\left(\sum_iw_i\zeta_i\right)^{2T}
\right]\ge0,
\label{eq:static_gain}
\end{equation}
where $\zeta_i=(1-\eta\lambda p_i)^E$.
\end{corollary}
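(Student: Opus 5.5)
With constant access $p_i$, the downstream coordinate decouples entirely from the prerequisite coordinate, so the plan is to obtain closed forms for both processes and then reduce the claimed gain to a single convexity inequality. Replacing $\rho_i$ by $p_i$ in the exact local residual Eq.~\eqref{eq:exactlocal}, one round on client $i$ multiplies $1-v$ by $\zeta_i=(1-\eta\lambda p_i)^E$. Eq.~\eqref{eq:stablecondition} together with $p_i\in[0,1]$ gives $\zeta_i\in(0,1]$.

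\textbf{Closed forms.} Averaging the local residuals shows that FedAvg multiplies $1-v$ by $\bar\zeta=\sum_iw_i\zeta_i$ every round, independent of $u$. Hence
\[
1-v_T^F=(1-v_0)\bar\zeta^{\,T}.
\]
Independent Local training on client $i$ gives $1-v_{i,T}^L=(1-v_0)\zeta_i^T$, which is the constant-access version of Eq.~\eqref{eq:local_v}.

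\textbf{Reduction to the score identity.} Since $S(v)=1-(1-v)^2$, we have $S(v_T^F)=1-(1-v_0)^2\bar\zeta^{2T}$. Using $\sum_iw_i=1$,
\[
\sum_iw_iS(v_{i,T}^L)=1-(1-v_0)^2\sum_iw_i\zeta_i^{2T}.
\]
Subtracting the second expression from the first produces exactly the bracketed identity in Eq.~\eqref{eq:static_gain}.

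\textbf{Sign of the gain.} Nonnegativity follows from Jensen's inequality applied to the convex map $z\mapsto z^{2T}$ on $[0,1]$, which gives $\left(\sum_iw_i\zeta_i\right)^{2T}\le\sum_iw_i\zeta_i^{2T}$. For $T=0$ both sides vanish. The factor $(1-v_0)^2$ is nonnegative, so the gain is nonnegative.

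The argument holds at every stable $\eta$ because no step depends on $\eta$ beyond the requirement $\zeta_i\in[0,1]$.

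The only point needing care, and the one I would check most closely, is that FedAvg averages the per-round factors $\zeta_i$ each round rather than averaging final residuals. This is what makes FedAvg's residual the $T$-th power of a mean, whereas Local's is a mean of $T$-th powers. Because $S$ squares the residual, the comparison reduces to Jensen's inequality for a single power function, with no interaction between rounds to control.
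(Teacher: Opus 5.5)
Your proposal is correct and follows the paper's proof essentially step for step. You use the same constant-access closed forms $1-v_T^F=(1-v_0)(\sum_iw_i\zeta_i)^T$ and $1-v_{i,T}^L=(1-v_0)\zeta_i^T$, substitute them into $S$, and apply Jensen's inequality to the convex map $z\mapsto z^{2T}$; your side remarks that $\zeta_i\in(0,1]$ under the stability condition and that the $T=0$ case is trivial are accurate, and slightly more careful than the paper, which simply takes $T\ge1$.
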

\begin{proof}
The constant-access residuals are
$1-v_T^F=(1-v_0)(\sum_iw_i\zeta_i)^T$ and
$1-v_{i,T}^L=(1-v_0)\zeta_i^T$.
Substitute them into $S$. Since $T\ge1$ and $\zeta_i\in[0,1]$, convexity of $z\mapsto z^{2T}$ proves the inequality by Jensen's inequality.
\end{proof}
Thus the negative low-step federated gain in this construction cannot be reproduced by merely retaining different constant downstream curvatures. Its model-dependent access law matters. This contrast is specific to the scalar common-state score, shared initialization, and time-constant access rates; it is not a general superiority theorem for FedAvg on static data. An arbitrary time-varying source schedule need not satisfy the same comparison.

\subsection{Bounds and sensitivity of the supervision clock}
\label{app:clock}
Let $\gamma=\eta\lambda\in(0,1)$. From Eq.\eqref{eq:closedFed},
$1-v_T^F=(1-v_0)\exp(-\cC_T)$ with $\cC_T=-\sum_t\log s_\eta(u_t^F)$.
For any $\rho\in[0,1]$,
\begin{equation}
1-(1-\gamma\rho)^E
=\int_0^{\gamma\rho} E(1-z)^{E-1}\,\mathrm dz
\ge E\gamma\rho(1-\gamma)^{E-1}.
\end{equation}
Using $-\log s\ge1-s$ gives the lower bound below. For the upper bound, convexity of $-\log$ gives
\begin{align}
-\log s_\eta(u)
&\le-E\sum_iw_i\log(1-\gamma\rho_i(u))\\
&\le E\sum_iw_i\frac{\gamma\rho_i(u)}{1-\gamma\rho_i(u)}
\le\frac{E\gamma}{1-\gamma}\bar\rho(u).
\end{align}
Consequently,
\begin{equation}
E\gamma(1-\gamma)^{E-1}\sum_t\bar\rho(u_t^F)
\le\cC_T\le\frac{E\gamma}{1-\gamma}\sum_t\bar\rho(u_t^F).
\label{eq:clock_bounds}
\end{equation}
The history of accessible supervision therefore regulates the effective amount of downstream learning.

For the derivative, first observe
\begin{equation}
r_\eta'=-E\sum_iw_i h_i(1-\eta h_i)^{E-1}<0,\qquad
\frac{\mathrm du_t^F}{\mathrm d\eta}=-(b-u_0)t r_\eta^{t-1}r_\eta'\ge0,
\end{equation}
where the derivative for $t=0$ is zero. At fixed $u$,
\begin{equation}
-\partial_\eta\log s_\eta(u)=
\frac{E\lambda\sum_iw_i\rho_i(u)(1-\eta\lambda\rho_i(u))^{E-1}}{s_\eta(u)}\ge0.
\label{eq:direct_clock}
\end{equation}
The chain rule then proves Eq.\eqref{eq:clock_derivative}; its second term is nonnegative by Eq.\eqref{eq:kappa_expanded}. If every learning rate is assigned the same external source schedule, independent of $\eta$, that second term is absent. This comparison requires fixing the source across learning rates, not merely freezing each rate's own source within a round.

The first source index at which every client has access probability at least $1/2$ is
\begin{equation}
T_{\mathrm{hit}}=\left\lceil
\frac{\log((b-c)/(b-u_0))}{\log r_\eta}
\right\rceil
\label{eq:hitting}
\end{equation}
The first source index $t$ with $u_t^F\ge c$ satisfies
$r_\eta^t\le(b-c)/(b-u_0)$, which proves Eq.\eqref{eq:hitting}. Since $\rho_i(c)=1/2$ for every client, this is a half-access threshold, not the onset of nonzero supervision: $\rho_i(u)>0$ for every finite $u$. A run of $T$ rounds uses sources $u_0,\ldots,u_{T-1}$, so the number of rounds at or above this threshold is $\max\{T-T_{\mathrm{hit}},0\}$. A crossing at $t=T$ cannot influence that run's preceding updates. This is a finite-budget access boundary, not a singular phase transition. Increasing $\eta$ outside the non-overshooting interval has not been analyzed by these inequalities. Nor does the theory predict that a vanishing step size increases one-round aggregation error.

\subsection{Numerical Illustration}
\label{app:numerics}
All illustrations use the fixed parameters
\begin{equation}
\begin{gathered}
K=2,\quad w_1=w_2=\tfrac12,\quad(h_1,h_2)=(1,9),\quad E=4,\quad T=5,\\
b=10,\quad c=7.25,\quad\lambda=2,\quad u_0=v_0=0.
\end{gathered}
\end{equation}
The common evaluation state is a downstream teaching prefix, and its score is
$S(v)=1-(1-v)^2$. Thus the evaluation distribution is fixed independently of the training generator. Local mean is $\sum_iw_iS(v_{i,T}^L)$, not $S(\sum_iw_iv_{i,T}^L)$. These scores are not percentages or fitted language-model accuracy estimates.

\begin{table}[htbp]
\centering
\caption{Exact-recursion scores for the same construction. All update factors satisfy the stated stability condition.}
\label{tab:construction}
\begin{adjustbox}{max width=\linewidth}
\begin{tabular}{lrrrr}
\toprule

\rowcolor{headbg}\textbf{$\eta$} & \textbf{FedAvg} & \textbf{Local mean} & \textbf{Pooled} & \textbf{Source-swapped FedAvg}\\
\midrule
0.02 & 0.281532 & 0.311221 & 0.360625 & 0.360494\\
0.06 & 0.950782 & 0.575594 & 0.977725 & 0.977670\\
\bottomrule
\end{tabular}
\end{adjustbox}
\end{table}
The federated gain changes from $-0.029689$ to $+0.375187$. At the smaller step, the faster client can independently reach downstream teaching states before the repeatedly averaged student does. At the larger step, the federated student reaches those states sufficiently early to share the acquired downstream capability. The comparison is finite-budget; all processes eventually reach the shared optimum.

\begin{figure}[t]
\centering
\includegraphics[width=.78\linewidth]{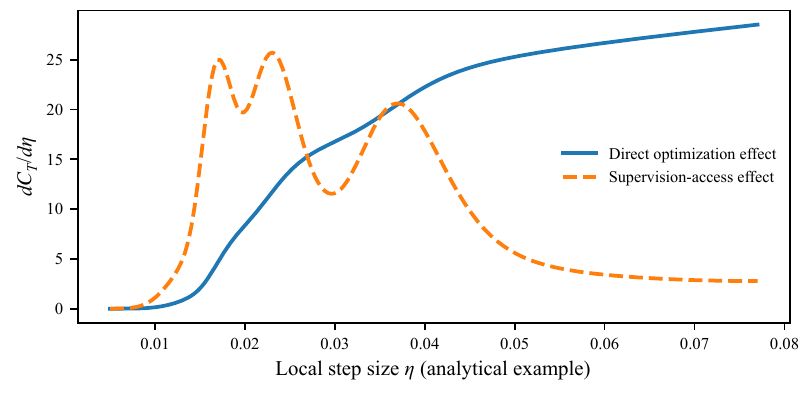}
\caption{\textbf{Two contributions to step-size sensitivity.} Both curves are exact derivatives of the construction's supervision clock. The mediated contribution changes the source along with the learner; it is absent under a source schedule held fixed across learning rates.}
\label{fig:clock_sensitivity}
\end{figure}
At $\eta=0.02$, the direct and mediated derivatives are approximately $8.368607$ and $19.770051$; at $\eta=0.06$, they are $26.682896$ and $3.410686$. The mediated contribution dominates at the smaller step size, whereas the direct contribution is larger at the higher step size, illustrating how their relative importance changes with access to downstream teaching states.

%% file: appendices/intervention_proofs.tex
\section{Magnitude calibration: constructive and local guarantees}
\label{app:intervention}
\subsection{Pure server scaling in the construction}
\begin{proposition}[A safe amplitude intervention]
\label{prop:scaling}
Under the construction's stability conditions, define
\begin{equation}
a_{\max}=\min\left\{\frac1{1-r_\eta},\frac1{1-(1-\eta\lambda)^E}\right\}.
\end{equation}
For a constant $a\in[1,a_{\max}]$, scaling the FedAvg increment by $a$ yields non-overshooting updates and, from the same initial state,
\begin{equation}
u_t^{(a)}\ge u_t^{(1)},\qquad v_t^{(a)}\ge v_t^{(1)}\quad\text{for all }t.
\end{equation}
Moreover, the population teacher loss evaluated on the current frozen source is nonincreasing in $a$ throughout this safe interval.
\end{proposition}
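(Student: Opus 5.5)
The plan is to reduce everything to scalar residual recursions, as in the proof of Theorem~\ref{thm:bottleneck}. Write $\bar u_t,\bar v_t$ for the FedAvg result from $(u_t,v_t)$ with the source frozen at $u_t$. The scaled update is $u_{t+1}=u_t+a(\bar u_t-u_t)$ and likewise for $v$. Averaging Eq.\eqref{eq:exactlocal} gives $b-\bar u_t=r_\eta(b-u_t)$ and $1-\bar v_t=s_\eta(u_t)(1-v_t)$. Substituting, the scaled process satisfies
\begin{equation*}
b-u_{t+1}^{(a)}=\big[1-a(1-r_\eta)\big](b-u_t^{(a)}),\qquad
1-v_{t+1}^{(a)}=\big[1-a(1-s_\eta(u_t^{(a)}))\big](1-v_t^{(a)}).
\end{equation*}
The source of the scaled process is its own $u_t^{(a)}$, and $a=1$ recovers Eq.\eqref{eq:exact_recursion}.

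First I will show that both multipliers lie in $[0,1)$ for $a\in[1,a_{\max}]$. For the prerequisite coordinate, $a\le 1/(1-r_\eta)$ gives $1-a(1-r_\eta)\ge0$, and $0<r_\eta<1$ gives the upper bound. For the downstream coordinate, $\rho_i\le1$ implies $s_\eta(u)\ge(1-\eta\lambda)^E$ for every $u$. Hence $1-s_\eta(u)\le 1-(1-\eta\lambda)^E$, and the second bound in $a_{\max}$ gives a nonnegative multiplier uniformly in $u$. By induction, $b-u_t^{(a)}\ge0$ and $1-v_t^{(a)}\ge0$ for all $t$, starting from $u_0<b$ and $v_0<1$. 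This is the non-overshooting claim.

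Next comes the comparison with $a=1$. Since $1-a(1-r_\eta)\le r_\eta$, the closed form $b-u_t^{(a)}=(1-a(1-r_\eta))^t(b-u_0)\le r_\eta^t(b-u_0)$ gives $u_t^{(a)}\ge u_t^{(1)}$. For $v$, I will induct on $t$ using two facts. First, $s_\eta$ is decreasing in $u$ by Eq.\eqref{eq:rho_derivative}, so $s_\eta(u_t^{(a)})\le s_\eta(u_t^{(1)})$. Second, the multiplier $1-a(1-s)$ is nonincreasing in $a$ and increasing in $s$. Together these give
\begin{equation*}
0\le 1-a\big(1-s_\eta(u_t^{(a)})\big)\le s_\eta(u_t^{(1)}).
\end{equation*}
Multiplying these nonnegative factors against the inductive hypothesis $0\le1-v_t^{(a)}\le1-v_t^{(1)}$ yields the claim at $t+1$. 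The nonnegativity established in the previous step is essential here: without it, products of residuals of mixed sign could not be compared.

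Finally, for the teacher loss, I evaluate Eq.\eqref{eq:construction_loss} at $\theta_t+a\Delta_t$ with the source frozen at $\phi=\theta_t$. The loss equals
\begin{equation*}
\sum_iw_i\Big[\tfrac{h_i}{2}\big(1-a(1-r_\eta)\big)^2(b-u_t)^2+\tfrac{\lambda\rho_i(u_t)}{2}\big(1-a(1-s_\eta(u_t))\big)^2(1-v_t)^2\Big],
\end{equation*}
where the $s_\eta(u_t)$ is now the fixed value at the frozen source. Each squared multiplier is the square of a nonnegative quantity that decreases linearly in $a$ on $[1,a_{\max}]$, so the loss is nonincreasing there.

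The main obstacle is the downstream multiplier. Its nonnegativity must hold uniformly along the scaled trajectory, whose source differs from FedAvg's, and this is exactly why the bound $\rho_i\le1$ and the second term of $a_{\max}$ are needed. I would check that step carefully, including the degenerate case in which a multiplier equals zero.
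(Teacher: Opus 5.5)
Your proposal is correct and follows essentially the same route as the paper: scalar residual recursions with multipliers $1-a(1-r_\eta)$ and $1-a(1-s_\eta(u))$, and nonnegativity of both multipliers from $\rho_i\le 1$ together with the two terms of $a_{\max}$. The $v$-comparison is the same induction using the monotonicity of $s_\eta$, and the frozen-source teacher score is again a sum of squares of nonnegative brackets that decrease linearly in $a$. The only cosmetic difference is that you write that score clientwise rather than with $\bar h$ and $\bar\rho(u_t)$, and you note the slightly sharper range $[0,1)$ for the multipliers.
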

\begin{proof}
Let $\Delta u=(1-r_\eta)(b-u)$ and $\Delta v=(1-s_\eta(u))(1-v)$. The scaled residuals are
\begin{equation}
b-u^+=[1-a(1-r_\eta)](b-u),\qquad
1-v^+=[1-a(1-s_\eta(u))](1-v).
\label{eq:scaledresidual}
\end{equation}
Because $\rho_i(u)\le1$, we have $s_\eta(u)\ge(1-\eta\lambda)^E$. The bound on $a$ therefore places both bracketed factors in $[0,1]$. For the prerequisite, the scaled factor is no larger than $r_\eta$, so $u_t^{(a)}\ge u_t^{(1)}$ follows directly by induction. For the downstream coordinate, write $q_a(u)=1-a(1-s_\eta(u))$. If $u^{(a)}\ge u^{(1)}$, monotonicity of $s_\eta$ gives
\begin{equation}
0\le q_a(u^{(a)})\le s_\eta(u^{(a)})\le s_\eta(u^{(1)}).
\end{equation}
Multiplication by the ordered nonnegative residuals in Eq.\eqref{eq:scaledresidual} completes the induction.

On the frozen round-start source, the mixed teacher loss at the candidate is exactly
\begin{align}
J_t(a)={}&\tfrac{\bar h}{2}(b-u_t)^2[1-a(1-r_\eta)]^2\nonumber\\
&+\tfrac{\lambda\bar\rho(u_t)}2(1-v_t)^2[1-a(1-s_\eta(u_t))]^2.
\label{eq:construction_score}
\end{align}
Each squared term is nonincreasing while its bracket is nonnegative. Thus $J_t'(a)\le0$ on the safe interval.
\end{proof}
For the numerical instance at $\eta=0.02$, the fixed-state scores at $a=1,2,3$ are $0.281532$, $0.884641$, and $0.991698$. No new direction is introduced. Proposition~\ref{prop:scaling} does not guarantee the behavior of a finite-cache selector, which can choose $a<1$ and does not know this construction's safe interval. It demonstrates that correcting a progress deficit can be sufficient without correcting the direction.

In this construction, a full predictive-change score on a client's frozen prefixes has a particularly simple form:
\begin{equation}
K_{i,t}(a)=\tfrac{a^2}{2}\left[h_i(\Delta u_t)^2+
\lambda\rho_i(u_t)(\Delta v_t)^2\right].
\end{equation}
This uses the sum of both prediction-position KLs; dividing by two gives the position average. It makes explicit why a larger beneficial step can nevertheless be bounded by a behavior-change budget.

\subsection{Local scale compensation in a general parameterization}
At a fixed learner and optimizer state, suppose
$\Delta(\eta)=\eta d+\eta^2\xi_\eta$ with uniformly bounded $\xi_\eta$. For $\beta=a\eta$, the actual candidate displacement is
\begin{equation}
a\Delta(\eta)=\beta d+\beta\eta\xi_\eta.
\label{eq:scaled_direction_remainder}
\end{equation}
Thus extrapolation amplifies the original $O(\eta^2)$ direction error to $O(|\beta|\eta)$, rather than leaving it $O(\eta^2)$. For a smooth fixed-source teacher score along the limiting ray,
\begin{equation}
J(\theta+\beta d)=J(\theta)-b_0\beta+\tfrac12c_0\beta^2+o(\beta^2),
\end{equation}
where $b_0=-\nabla J(\theta)^\top d$ and $c_0=d^\top\nabla^2J(\theta)d$. In a favorable local model, $b_0,c_0>0$. Expanding the old-to-candidate KL yields
\begin{equation}
K_i(\theta+\beta d)=\tfrac12\beta^2 f_i+o(\beta^2),\qquad
f_i=d^\top F_i d\ge0,
\end{equation}
where $F_i$ is the conditional Fisher information averaged over the fixed source. These KL expansions require differentiable, normalized conditionals with common support and justified exchange of differentiation and expectation. For the actual candidate in Eq.\eqref{eq:scaled_direction_remainder}, the score expansion acquires an additional $O(|\beta|\eta)$ term and the KL expansion an $O(\beta^2\eta)$ term, locally for small $\eta$ and $|\beta|$; the latter uses the zero KL gradient at the starting policy. Consequently, the quadratic model discards both displacement and direction remainders. If $f_{\max}=\max_i f_i>0$, its continuous constrained problem over $\beta\ge0$ has the solution
\begin{equation}
\beta^\star=\min\left\{\frac{b_0}{c_0},\sqrt{\frac{2\delta}{f_{\max}}}\right\}.
\label{eq:local_beta}
\end{equation}
Hence the idealized compensating multiplier is $a^\star\simeq\beta^\star/\eta$.
This is a local quadratic design calculation, not an exact optimizer for the nonlinear problem. In particular, if $a$ grows like $1/\eta$, the candidate displacement must still remain in the neighborhood where the expansion is accurate; making $\eta$ small does not alone ensure that. A finite candidate grid, changed directions, clipping, and persistent optimizer states all limit compensation. If $b_0\le0$, a positive multiple of the proposed direction need not improve the teacher objective at all.

\subsection{What an ideal full-distribution trust constraint controls}
\label{app:trust}
The use of a frozen-state surrogate with a bounded policy change is related to trust-region policy optimization \citep{schulman2015trpo}. Here we provide the precise statement needed for the distillation setting, separately from its empirical implementation. Throughout, logarithms are natural and $\TV(P,Q)=\sup_A|P(A)-Q(A)|$, equivalently one half of the $L^1$ distance between densities.

\begin{proposition}[An ideal source-transfer bound]
\label{prop:trust}
Let $\theta$ be the old policy and $\theta'$ a candidate, with the same prompt distribution and padded horizon $L$. Define the full-vocabulary population constraint
\begin{equation}
K_i(\theta')=\E_{s\sim d_i^\theta}
\KL\big(\pi_\theta(\cdot\mid s)\|\pi_{\theta'}(\cdot\mid s)\big).
\end{equation}
Assume the same policies are used for sampling and for this KL, the trajectory laws have the necessary absolute continuity, and $0\le\ell(\theta';s)\le M$ on the relevant states. Then
\begin{equation}
\left|L_i(\theta';\theta')-L_i(\theta';\theta)\right|
\le M\sqrt{\frac{LK_i(\theta')}{2}}.
\label{eq:idealtrustbound}
\end{equation}
If $K_i(\theta')\le\delta$ for every client, then
\begin{equation}
L(\theta';\theta')\le L(\theta';\theta)+M\sqrt{\frac{L\delta}{2}}.
\end{equation}
\end{proposition}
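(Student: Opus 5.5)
The plan is to write both losses as expectations of the same bounded function under two different trajectory laws. Then bound their difference by total variation, and control that total variation by the per-position KL through Pinsker and the chain rule for KL.

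By the definition in Appendix~\ref{app:general},
\[
L_i(\theta';\phi)=\frac1L\sum_{h=1}^{L}\E_{P_i^\phi}\ell(\theta';s_h).
\]
Define the position-averaged function $F(x,y_{1:L})=\frac1L\sum_h\ell(\theta';x,y_{<h},h)$. Since $0\le\ell\le M$, this function takes values in $[0,M]$, and
\[
L_i(\theta';\phi)=\E_{P_i^\phi}F.
\]
For any two laws $P,Q$ and any measurable $F$ with values in $[0,M]$, we have $|\E_PF-\E_QF|\le M\,\TV(P,Q)$. This follows by writing $F=\int_0^M\ind_{F>t}\,\mathrm dt$ and bounding each indicator by $\TV$. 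Applying it with $P=P_i^{\theta'}$ and $Q=P_i^{\theta}$ gives
\[
|L_i(\theta';\theta')-L_i(\theta';\theta)|\le M\,\TV(P_i^\theta,P_i^{\theta'}).
\]

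Pinsker's inequality gives $\TV(P_i^\theta,P_i^{\theta'})\le\sqrt{\KL(P_i^\theta\|P_i^{\theta'})/2}$, with the old policy in the first argument to match the direction of $K_i$. The chain rule for KL applies to the path law in Eq.~\eqref{eq:pathlaw}. The prompt factor $D_i$ is shared and contributes zero. The absorbing padding contributes zero after termination, because both policies use the same absorbing symbol. Hence
\[
\KL(P_i^\theta\|P_i^{\theta'})=\sum_{h=1}^L\E_{P_i^\theta}\KL\big(\pi_\theta(\cdot\mid s_h)\|\pi_{\theta'}(\cdot\mid s_h)\big)=L\,K_i(\theta').
\]
The last equality holds because $d_i^\theta$ draws $h$ uniformly, so the sum over positions equals $L$ times the average. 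Combining the three steps yields Eq.~\eqref{eq:idealtrustbound}.

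For the aggregate statement, write $L=\sum_iw_iL_i$ with both arguments fixed. Then
\[
L(\theta';\theta')-L(\theta';\theta)\le\sum_iw_iM\sqrt{LK_i/2}\le M\sqrt{L\delta/2},
\]
using $K_i\le\delta$ for every client and $\sum_iw_i=1$.

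The main obstacle is making the KL chain-rule step rigorous. It needs the absolute continuity assumption so that the log-likelihood ratio decomposes into a sum of conditional log ratios, each with finite expectation. The absorbing-state convention must also be checked so that padded positions contribute exactly zero. Everything else is standard.
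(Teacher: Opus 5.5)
Your proof is correct and follows essentially the same route as the paper. Both arguments combine the KL chain rule $\KL(P_i^\theta\|P_i^{\theta'})=LK_i(\theta')$, Pinsker's inequality, the $M\,\TV$ bound for $[0,M]$-valued functions, and then average over clients. The only difference is cosmetic: you lift the loss to the trajectory-level average $F$, whereas the paper pushes the trajectory law forward to $d_i^\phi$ and uses data processing, $\TV(d_i^\theta,d_i^{\theta'})\le\TV(P_i^\theta,P_i^{\theta'})$; the two steps are dual and give the same constant.
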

\begin{proof}
The KL chain rule, with unchanged prompts, gives
\begin{equation}
\KL(P_i^\theta\|P_i^{\theta'})
=\sum_{h=1}^L\E_{P_i^\theta}
\KL(\pi_\theta(\cdot\mid s_h)\|\pi_{\theta'}(\cdot\mid s_h))
=LK_i(\theta').
\end{equation}
Sampling a uniform position and taking its prefix is a common Markov transformation of the trajectory. Data processing and Pinsker's inequality therefore imply
\begin{equation}
\TV(d_i^\theta,d_i^{\theta'})
\le\TV(P_i^\theta,P_i^{\theta'})
\le\sqrt{LK_i(\theta')/2}.
\end{equation}
The expectation of any function taking values in $[0,M]$ changes by at most $M$ times the total variation distance. Apply this to $\ell(\theta';\cdot)$, then average over clients.
\end{proof}
A corresponding change-from-start inequality is
\begin{equation}
L(\theta';\theta')-L(\theta;\theta)
\le \underbrace{L(\theta';\theta)-L(\theta;\theta)}_{\text{fixed-source score change}}
+M\sqrt{L\delta/2}.
\end{equation}
The bounded-loss assumption is additional and nontrivial for KL; a suitable probability floor or restricted domain can imply it, but the deployed neural model does not automatically satisfy it. For long sequences the bound can also be vacuous. The result bounds a distillation-loss distribution shift, not task accuracy or access to useful reasoning states.

\subsection{Finite-Cache and Vocabulary Approximations}
Fix a prefix and a set $S$ of retained token IDs, and combine the complement $T$ into one tail bucket. Let $p_T=\sum_{v\in T}p_v$, $q_T=\sum_{v\in T}q_v$, and let $\widetilde p,\widetilde q$ be the partitioned distributions. Expanding the tail sum gives the exact identity
\begin{equation}
\KL(p\|q)=\KL(\widetilde p\|\widetilde q)
+p_T\KL\big(p(\cdot\mid T)\|q(\cdot\mid T)\big),
\label{eq:tailidentity}
\end{equation}
when $p_T,q_T>0$, with extended-real KL if some conditional support is missing. If $p_T=0$, the tail term contributes zero. If $p_T>0=q_T$, both full and coarsened KL are infinite, and no undefined tail conditional is needed to assert the lower bound. Thus the coarsened KL is a lower bound. Its being below $\delta$ cannot certify that the full KL is below $\delta$, and changes within the tail are invisible to the coarsened score. Comparing candidates requires the same fixed partition for every candidate at that prefix.

The implemented cache also differs from a population prefix average: it uses few trajectories, regularly selected positions, and training-dependent candidates. Positions within a trajectory are correlated. Because the same round's data help create the candidates, a naive independent-validation finite-candidate bound is inapplicable.

Finally, the current rule does not always compare a feasible update with $a=0$. Even an accurately measured score better than the ordinary FedAvg candidate need not be better than the round start. Proposition~\ref{prop:trust} thus bounds source-distribution shift under its stated population assumptions; it does not establish monotonic improvement for the finite-cache selection rule.

%% file: appendices/algorithm_details.tex
\section{Algorithm and Implementation Details}
\label{app:algorithm}
Algorithm~\ref{alg:fedtops} gives one round of \method. Local OPD determines the update direction, and the teacher-guided selector chooses its magnitude using a fixed cache of that round's training trajectories.
\begin{algorithm}[htbp]
\small
\caption{\method: teacher-guided magnitude selection (one round)}
\label{alg:fedtops}
\begin{algorithmic}[1]
\Require Starting model $\theta_t$, client weights $w_i$, teacher $q$, budget $\delta$, candidates $\cA$
\For{each client $i$ in parallel}
 \State Generate local training trajectories from $\theta_t$; cache sampled prefixes and distributions
 \State Perform local OPD with learning rate $\eta$ to obtain $\theta_{i,t}^{+}$
\EndFor
\State $\Delta_t\gets\sum_i w_i\theta_{i,t}^{+}-\theta_t$
\For{$a\in\cA$}
 \State Form $\theta_t(a)\gets\theta_t+a\Delta_t$
 \State Evaluate $\widehat J_t(a)$ and each $\widehat K_{i,t}(a)$ on the fixed caches using Eq.\eqref{eq:selector}
\EndFor
\State $\cA_f\gets\{a\in\cA:\max_i\widehat K_{i,t}(a)\le\delta\}$
\If{$\cA_f=\varnothing$}
 \State $a_t\gets0$
\ElsIf{$1\in\cA_f$}
 \State $a_t\gets1$
 \For{$a\in\cA_f$ in fixed order}
  \If{$\widehat J_t(a)<\widehat J_t(a_t)$}
   \State $a_t\gets a$
  \EndIf
 \EndFor
\Else
 \State $a_t\gets\argmin_{a\in\cA_f}\widehat J_t(a)$
\EndIf
\State \Return $\theta_{t+1}=\theta_t+a_t\Delta_t$
\end{algorithmic}
\end{algorithm}
\subsection{Trajectory Cache and Candidate Scores}
Every client begins a round from the same global student. Before local optimization, it randomly selects four training responses and at most 16 evenly spaced prediction positions per response. All candidates use the same selected prefixes. Reusing the training responses avoids additional trajectory generation; candidate evaluation requires forward passes but no optimizer steps.

\paragraph{A fixed vocabulary partition.}
At each selected prefix, let $S$ be the union of the teacher's and starting student's top-16 token IDs. The cache stores their probabilities on $S$ and a tail bucket containing the remaining probability mass. For any distribution $p$, this tail is $1-\sum_{v\in S}p_v$, where $p_v$ is the full-softmax probability; the retained tokens are not renormalized. The same partition is used for every candidate.

Teacher agreement is measured by the teacher-to-candidate forward KL, averaged within each client and then combined using the aggregation weights. Predictive change is measured by the starting-student-to-candidate forward KL, with a separate constraint on each client's cache mean. Both quantities refer to these compressed distributions. 

\subsection{Selection Rule}
The candidate set is $\{0.5,1,2,3,5,10\}$. If $a=1$ is feasible, it is the incumbent and is replaced only by a feasible candidate with a strictly lower teacher score; equal scores therefore preserve FedAvg. If $a=1$ is infeasible, the server chooses the lowest-score feasible candidate. If no candidate is feasible, the server retains the round's starting model. Score-comparison and feasibility tolerances are $10^{-9}$ and $10^{-8}$, respectively; Algorithm~\ref{alg:fedtops} omits these tolerances for clarity. 

\subsection{Coordinate-Level Meaning for LoRA}
Aggregation and scaling operate on the uploaded LoRA factors, which share an initialization. We use rank 8, $\alpha=16$, zero dropout, and all-linear target modules. If a layer has factors $B_t,A_t$ and increments $\Delta B,\Delta A$, then
\begin{equation}
(B_t+a\Delta B)(A_t+a\Delta A)-B_tA_t
=a(\Delta B A_t+B_t\Delta A)+a^2\Delta B\Delta A.
\end{equation}
A scaled factor increment therefore need not equal a scaled merged dense update. The local-response analysis can be expressed in trainable coordinates, while the scalar construction abstracts away this factor-product geometry. \method\ uses no additional correction direction. We use PEFT's default LoRA initialization (Kaiming-uniform $A$ and zero-initialized $B$), shared across clients, and target all attention and MLP linear projections (\texttt{q\_proj}, \texttt{k\_proj}, \texttt{v\_proj}, \texttt{o\_proj}, \texttt{gate\_proj}, \texttt{up\_proj}, and \texttt{down\_proj}), excluding the language-model head.
Aggregation and server scaling operate directly on the LoRA factors without canonicalization or re-factorization, and evaluation loads the resulting adapter through vLLM without merging it into the frozen base weights.

\subsection{Execution and Data Locality}
Our experiments simulate logical clients on one GPU. In a distributed implementation, each client could evaluate candidates on its own cache and return scalar scores and trust statistics. We do not evaluate that deployment here. Each candidate forward pass processes the cached prefix, with the language-model head evaluated only at selected positions. Timing and forward-pass counts are given in Appendix~\ref{app:efficiency}.

%% file: appendices/experimental_protocol.tex
\clearpage
\section{Experimental Setup}
\label{app:protocol}
This appendix specifies the data partition, training procedure, and evaluation protocol used in Section~\ref{sec:experiments}.

\subsection{Models and Client Data}
We distill Qwen3-8B into Qwen3-0.6B-Base and Qwen3-1.7B-Base, using the 0.6B model for the main study \citep{yang2025qwen3}. Client prompts are drawn from MATH \citep{hendrycks2021math}, using three disjoint subject groups: C1 contains prealgebra and algebra; C2 contains number theory and counting and probability; C3 contains intermediate algebra and precalculus. Each client receives 570 prompts. The partition is fixed with seed 42 and shared across methods. Client weights are proportional to their prompt counts, giving $w_i=1/3$. Although the prompts remain fixed, students generate new responses during training. We use the official training split of \texttt{EleutherAI/hendrycks\_math}, partitioned into three subject-based clients, with 570 OPD prompts per client selected without replacement using proportional subject--difficulty stratification (seed 42), disjoint from the reserved warm-up and probe subsets.
Preprocessing removes malformed examples, examples without an extractable and classifiable boxed answer, and duplicate questions across splits; during training, chat-formatted prompts exceeding 2,048 tokens are filtered rather than truncated, and each client draws batches of 32 prompts from a persistent shuffled data loader, generating one response per prompt with temperature $1$, top-$p=1$, and a 4,096-token response limit. Details can be found in the anonymous codes.

\subsection{Training and Baselines}
Within each student size, all trained methods optimize rank-8 LoRA adapters from the same initialization with AdamW (Table~\ref{tab:config}). We denote $\eta=3\times10^{-6}$ by 1$\times$LR and $9\times10^{-6}$ by 3$\times$LR. \method\ uses Algorithm~\ref{alg:fedtops}, with trust budget 0.05 for all student models.

\input{tables/training_config}

\paragraph{Training budgets.}
Each Local model uses 1,600 rollouts and 200 AdamW mini-batch steps. FedAvg and \method\ use the same per-client budget: 4,800 rollouts and 600 client steps over 50 rounds system-wide. Centralized uses 4,800 rollouts and 600 consecutive steps on pooled prompts. These totals match across methods, while sequential update depth differs. Token counts and training time depend on response length (Appendix~\ref{app:efficiency}).

\paragraph{Reference models.}
Local mean averages the three independently trained client-model scores. Base is an untrained, adapter-free Qwen3-0.6B-Base model, evaluated once and repeated in both learning-rate blocks. The 1.7B study compares Centralized, Local, FedAvg, and \method.

\subsection{Evaluation}
\paragraph{Benchmarks.}
We evaluate on six mathematical reasoning benchmarks, totaling 1,546 problems and 12,368 sampled responses per model (Table~\ref{tab:eval_sets}). Client-private test sets are excluded. All methods use the same evaluation prompt format, sampling protocol, and answer grader.

\input{tables/eval_sets}

\paragraph{Evaluation prompt.}
Evaluation is zero-shot. Each problem is supplied as a single user message, with the following instruction appended:
\begin{quote}
\small\ttfamily\raggedright
Please reason step by step, and put your final answer within \textbackslash boxed\{\}.
\end{quote}
\paragraph{Chat template and sampling.}
No separate system message is supplied to the tokenizer. We render the message with its chat template using \texttt{add\_generation\_prompt=True} and \texttt{enable\_thinking=False}. Each problem receives eight responses sampled with temperature 1, top-$p=0.8$, top-$k=-1$, minimum probability 0, seed 0, and a maximum response length of 8,192 tokens. Answers are extracted with \texttt{\seqsplit{qwen_extraction_math_verify_0p8_latex_v2}} and checked with \texttt{math-verify} 0.8.0. 

\paragraph{Metrics.}
Let $c_{bjm}\in\{0,1\}$ indicate whether response $m$ to problem $j$ in benchmark $b$ is correct. For a benchmark with $n_b$ problems,
\begin{equation}
\mathrm{Avg@8}_b=\frac{100}{8n_b}\sum_{j=1}^{n_b}\sum_{m=1}^{8}c_{bjm},\qquad
\mathrm{Pass@8}_b=\frac{100}{n_b}\sum_{j=1}^{n_b}\ind\!\left\{\sum_{m=1}^{8}c_{bjm}>0\right\}.
\end{equation}
Avg@8 measures average response correctness, whereas Pass@8 measures the fraction of problems solved at least once. Cap-hit rate is the percentage of responses reaching the generation length limit. Macro scores average the six benchmark scores with equal weights, including for cap-hit rate. Aggregation precedes rounding, and all reported scores are percentages.

\paragraph{Statistical units.}
Each configuration is trained once with seed 42. The eight responses per problem measure sampling variability within an evaluation, and Local mean averages client models rather than independent training repetitions. For the paired source intervention, bootstrap resampling uses the same question indices for all three evaluated models and retains all eight responses to each sampled question. These intervals describe evaluation uncertainty conditional on the trained models.

\subsection{Resources and Software}
Each experiment runs on one RTX PRO 6000 Blackwell Server GPU with 96 GB memory. The three clients are simulated on a single machine; additional GPUs are used for independent experiments. Table~\ref{tab:resource_versions} identifies the model and training-data versions. The grading dependencies are \texttt{latex2sympy2\_extended} 1.10.2, \texttt{sympy} 1.14.0, and ANTLR runtime 4.9.3. Our implementation builds on verl; the audited environment uses PyTorch 2.8.0 (CUDA 12.8), Transformers 4.57.1, PEFT 0.20.0, FlashAttention 2.8.3, and vLLM 0.11.0 for rollout generation and evaluation.

\begin{table}[htbp]
\centering\footnotesize
\caption{Model and data revisions for training. Exact identifiers pin the resources independently of later repository updates.}
\label{tab:resource_versions}
\setlength{\tabcolsep}{5pt}
\begin{adjustbox}{max width=\linewidth}
\begin{tabular}{ll}
\toprule

\rowcolor{headbg}\textbf{Resource} & \textbf{Revision}\\
\midrule
Qwen3-0.6B-Base & \texttt{da87bfb608c14b7cf20ba1ce41287e8de496c0cd}\\
Qwen3-1.7B-Base & \texttt{ea980cb0a6c2ae4b936e82123acc929f1cec04c1}\\
Qwen3-8B & \texttt{b968826d9c46dd6066d109eabc6255188de91218}\\
EleutherAI/hendrycks\_math & \texttt{7811dc4cf37bd3a0384adc84f33da0695629b4dc}\\
\bottomrule
\end{tabular}
\end{adjustbox}
\end{table}

The Base reference uses the same sampling protocol and prompt template, with different engine batching. Its checkpoint revision was not recorded. Generation settings and available artifact identifiers are provided in the accompanying reproducibility materials.

%% file: tables/training_config.tex
\begin{table}[htbp]
\centering
\caption{Training configuration. The local learning rate $\eta$ controls client optimization; the server multiplier $a$ scales the aggregated increment.}
\label{tab:config}
\small
\begin{tabularx}{\linewidth}{p{.32\linewidth}Y}
\toprule

\rowcolor{headbg}\textbf{Setting} & \textbf{Value}\\
\midrule
Participation / rounds & All 3 clients / 50 rounds\\
Rollouts per round & 32 per client, 1 response per prompt\\
Mini-batch / micro-batch & 8 / 4; one optimization epoch\\
Local learning rate & 0.6B: $3\times10^{-6}$ or $9\times10^{-6}$; 1.7B: $3\times10^{-6}$\\
Optimizer & AdamW, $\beta=(0.9,0.999)$, weight decay 0.01, gradient clipping 1\\
Scheduler & Cosine; no warmup\\
LoRA & Rank 8, $\alpha=16$, dropout 0, all-linear target modules\\
Aggregation & Equal-weight averaging in shared LoRA factor coordinates\\
Training decoding & Non-thinking, temperature 1, top-$p=1$\\
Training length limits & Prompt 2,048 tokens; response 4,096 tokens\\
Data / training seed & 42 / 42\\
\bottomrule
\end{tabularx}
\end{table}

%% file: tables/eval_sets.tex
\begin{table}[htbp]
\centering
\caption{Evaluation benchmarks. Each problem receives eight sampled responses.}
\label{tab:eval_sets}
\small
\begin{adjustbox}{max width=\linewidth}
\begin{tabular}{lrr}
\toprule

\rowcolor{headbg}\textbf{Benchmark} & \textbf{Problems} & \textbf{Responses}\\
\midrule
MATH500 & 500 & 4,000\\
AMC23 & 40 & 320\\
Minerva & 272 & 2,176\\
OlympiadBench & 674 & 5,392\\
AIME24 & 30 & 240\\
AIME25 & 30 & 240\\
\midrule
Total & 1,546 & 12,368\\
\bottomrule
\end{tabular}
\end{adjustbox}
\end{table}

%% file: appendices/additional_experiments.tex
\clearpage
\section{Additional Experimental Analyses}
\label{app:additional}
We examine the effect of aggregate scaling, the behavior of the selector, and the influence of rollout sources on subsequent learning. We then characterize computational cost and response length under the common evaluation protocol.

\subsection{Selection Across Communication Rounds}
\label{app:selector_config}
Table~\ref{tab:selector_counts} complements the round-level traces in Figure~\ref{fig:selection}. Candidate and cache settings are given in Table~\ref{tab:config}; Appendix~\ref{app:algorithm} specifies scoring and tie handling.

\input{tables/selector_counts}

At 1$\times$LR, all candidates are feasible throughout training, yet the selected multiplier changes across rounds. Thus teacher-score comparisons alone can favor different scales. At 3$\times$LR, 25 of the 300 candidates are infeasible across 18 rounds, and $a=10$ is selected less often. The traces characterize adaptation within each setting.

Figure~\ref{fig:selection} displays each candidate's teacher score relative to $a=1$ in the same round. Both panels use a shared symmetric-logarithmic color scale, linear within $\pm0.002$ nats, without clipping or smoothing. This makes the selected scales and feasibility boundaries comparable over training.

\subsection{Rollout-Source Intervention}
\label{app:source_protocol}
The intervention in Section~\ref{sec:mechanism_experiments} uses a common round-9 receiver checkpoint and frozen sources constructed from the same round-10 aggregate increment. We evaluate the receiving students on MATH500. Their cap-hit rates are 51.85\% with source $a=3$ and 53.30\% with source $a=1$, compared with 52.38\% at the common start.

\paragraph{Paired evaluation.}
Table~\ref{tab:source_detail} reports Avg@8 differences and paired 95\% bootstrap intervals. We resample the 500 questions 5,000 times, retaining each question's eight responses together and using the same question indices for all three models. These intervals characterize evaluation uncertainty for the saved models, not variation across training seeds. Differences use unrounded statistics and may differ from subtraction of the displayed endpoint scores.

\input{tables/source_detail}

\subsection{Computational Cost}
\label{app:efficiency}
The auxiliary forwards operate on at most 12 cached responses and 192 prediction positions per round. Table~\ref{tab:efficiency} reports caching and finalization time separately.

\input{tables/efficiency}

\paragraph{Forward computation.}
Cache construction adds 12 student and 12 teacher prefix forwards. Scoring a reference and six candidates adds 84 student forwards: 96 student and 12 teacher forwards in total per round. Teacher distributions are reused across candidates. Each pass processes the actual prefix, with only the LM head restricted to selected positions; cost therefore depends on prefix length.

\paragraph{Measured time and training tokens.}
Finalization includes scoring and saving; timing shares use the training-progress interval, excluding initialization and evaluation. Equal rollout counts need not imply equal training time because response lengths vary. The response-mask token totals are 6,445,612 for 0.6B at 1$\times$LR, 5,164,606 for 0.6B at 3$\times$LR, and 3,748,317 for 1.7B. These counts cover all 150 client-round batches and exclude prompt tokens and auxiliary forwards. The single-machine experiments characterize computation; they do not measure network traffic or comparative peak memory.

\input{appendices/response_lengths}

%% file: tables/selector_counts.tex
\begin{table}[htbp]
\centering
\caption{Candidate selection over 50 rounds with the 0.6B student. The trust budget is fixed within each run.}
\label{tab:selector_counts}
\small
\begin{adjustbox}{max width=\linewidth}
\begin{tabular}{lrr}
\toprule

\rowcolor{headbg}\textbf{Quantity} & \textbf{1$\times$LR} & \textbf{3$\times$LR}\\
\midrule
Trust budget $\delta$ & 0.05 & 0.05\\
Selected $a=0.5$ & 2 & 3\\
Selected $a=1$ & 3 & 9\\
Selected $a=2$ & 1 & 7\\
Selected $a=3$ & 3 & 8\\
Selected $a=5$ & 4 & 13\\
Selected $a=10$ & 37 & 10\\
Infeasible candidates (of 300) & 0 & 25\\
Rounds with infeasible candidates & 0 & 18\\
Holds & 0 & 0\\
\bottomrule
\end{tabular}
\end{adjustbox}
\end{table}

%% file: tables/source_detail.tex
\begin{table}[htbp]
\centering\small
\caption{Paired differences in receiver Avg@8 on MATH500 (pp).}
\label{tab:source_detail}
\begin{adjustbox}{max width=\linewidth}
\begin{tabular}{lrr}
\toprule

\rowcolor{headbg}\textbf{Comparison} & \textbf{Difference} & \textbf{95\% paired interval}\\
\midrule
Source $a=3$ receiver $-$ common start & 1.20 & [0.27, 2.10]\\
Source $a=1$ receiver $-$ common start & 0.20 & [$-$0.70, 1.07]\\
Source $a=3$ receiver $-$ source $a=1$ receiver & 1.00 & [0.18, 1.82]\\
\bottomrule
\end{tabular}
\end{adjustbox}
\end{table}

%% file: tables/efficiency.tex
\begin{table}[htbp]
\centering
\caption{Auxiliary computation for \method. Cache construction and candidate scoring/saving are timed over the training loop. Share is their sum divided by the training-loop interval; initialization and evaluation are excluded.}
\label{tab:efficiency}
\small\setlength{\tabcolsep}{4pt}
\begin{adjustbox}{max width=\linewidth}
\begin{tabular}{lrrrrr}
\toprule

\rowcolor{headbg}\textbf{Student / LR} & \textbf{Cache (s)} & \textbf{Score/save (s)} & \textbf{Sum (s)} & \textbf{Training (s)} & \textbf{Share (\%)}\\
\midrule
0.6B / 1$\times$ & 188.69 & 979.36 & 1168.05 & 13837 & 8.44\\
0.6B / 3$\times$ & 180.56 & 764.11 & 944.67 & 6227 & 15.17\\
1.7B / 1$\times$ & 171.22 & 754.80 & 926.03 & 7598 & 12.19\\
\bottomrule
\end{tabular}
\end{adjustbox}
\end{table}

%% file: appendices/response_lengths.tex
\subsection{Response Length and Generation Limits}
\label{app:lengths}
Table~\ref{tab:response_length} summarizes generation length. \method\ reduces cap hits at the reference learning rate for both student sizes, while the 0.6B high-rate run has a similar macro cap-hit rate to FedAvg despite higher accuracy. Cap hits characterize output behavior, not supervision quality.

\input{tables/response_lengths}

%% file: tables/response_lengths.tex
\begin{table}[htbp]
\centering\small
\caption{Mean response tokens / cap-hit rate (\%), equally averaged across benchmarks. The evaluation cap is 8,192 tokens. The 0.6B Base result is repeated for reference.}
\label{tab:response_length}
\begin{adjustbox}{max width=\linewidth}
\begin{tabular}{lrrr}
\toprule

\rowcolor{headbg}\textbf{Method} & \textbf{0.6B, 1$\times$LR} & \textbf{0.6B, 3$\times$LR} & \textbf{1.7B, 1$\times$LR}\\
\midrule
Base student & 4366.95 / 50.27 & 4366.95 / 50.27 & \textemdash\\
Centralized & 3675.20 / 38.18 & 4535.74 / 48.62 & 1537.92 / 8.98\\
Local mean & 4807.97 / 54.58 & 4120.63 / 44.23 & 3664.13 / 38.94\\
FedAvg & 4821.79 / 54.88 & 3164.08 / 31.16 & 3644.79 / 38.66\\
\rowcolor{oursbg}\textbf{FedTOPS} & 3830.62 / 38.31 & 3334.40 / 31.23 & 1617.09 / 9.89\\
\bottomrule\end{tabular}
\end{adjustbox}\end{table}

%% file: appendices/main_results.tex
\clearpage
\section{Client-Level Accuracy and Generation Statistics}
\label{app:mainresults}
The following tables complement the main accuracy comparisons with individual Local models and benchmark-level cap-hit rates. Local C1, C2, and C3 correspond to the subject partitions in Appendix~\ref{app:protocol}; Local mean averages their scores. Avg@8 and Pass@8 tables contain only these client results, while cap-hit tables compare all original training methods and the available Base reference.

All scores are percentages; Macro weights benchmarks equally. Best and second-best values among the rows in each table are bold and underlined. MATH, AMC, and Olympiad denote MATH500, AMC23, and OlympiadBench.

\subsection{0.6B Student at the Reference Learning Rate}
\label{app:full06_lr1}
\begin{table}[htbp]
\centering
\caption{0.6B student, 1$\times$LR: individual Local models, Avg@8 ($\uparrow$).}
\label{tab:full06_lr1_avg_at_8}
\small\setlength{\tabcolsep}{3pt}
\begin{adjustbox}{max width=\linewidth}
\begin{tabular}{lrrrrrrr}
\toprule\benchheader\midrule
\csname @@input\endcsname tables/main_06b_lr1_avg_at_8_rows.tex
\bottomrule
\end{tabular}
\end{adjustbox}
\end{table}
\begin{table}[htbp]
\centering
\caption{0.6B student, 1$\times$LR: individual Local models, Pass@8 ($\uparrow$).}
\label{tab:full06_lr1_pass_at_8}
\small\setlength{\tabcolsep}{3pt}
\begin{adjustbox}{max width=\linewidth}
\begin{tabular}{lrrrrrrr}
\toprule\benchheader\midrule
\csname @@input\endcsname tables/main_06b_lr1_pass_at_8_rows.tex
\bottomrule
\end{tabular}
\end{adjustbox}
\end{table}
\begin{table}[htbp]
\centering
\caption{0.6B student, 1$\times$LR: cap-hit rate ($\downarrow$).}
\label{tab:full06_lr1_max_length_hit_rate}
\small\setlength{\tabcolsep}{3pt}
\begin{adjustbox}{max width=\linewidth}
\begin{tabular}{lrrrrrrr}
\toprule\benchheader\midrule
\csname @@input\endcsname tables/main_06b_lr1_max_length_hit_rate_rows.tex
\bottomrule
\end{tabular}
\end{adjustbox}
\end{table}

\clearpage
\subsection{0.6B Student at the Larger Learning Rate}
\label{app:full06_lr3}
\begin{table}[htbp]
\centering
\caption{0.6B student, 3$\times$LR: individual Local models, Avg@8 ($\uparrow$).}
\label{tab:full06_lr3_avg_at_8}
\small\setlength{\tabcolsep}{3pt}
\begin{adjustbox}{max width=\linewidth}
\begin{tabular}{lrrrrrrr}
\toprule\benchheader\midrule
\csname @@input\endcsname tables/main_06b_lr3_avg_at_8_rows.tex
\bottomrule
\end{tabular}
\end{adjustbox}
\end{table}
\begin{table}[htbp]
\centering
\caption{0.6B student, 3$\times$LR: individual Local models, Pass@8 ($\uparrow$).}
\label{tab:full06_lr3_pass_at_8}
\small\setlength{\tabcolsep}{3pt}
\begin{adjustbox}{max width=\linewidth}
\begin{tabular}{lrrrrrrr}
\toprule\benchheader\midrule
\csname @@input\endcsname tables/main_06b_lr3_pass_at_8_rows.tex
\bottomrule
\end{tabular}
\end{adjustbox}
\end{table}
\begin{table}[htbp]
\centering
\caption{0.6B student, 3$\times$LR: cap-hit rate ($\downarrow$).}
\label{tab:full06_lr3_max_length_hit_rate}
\small\setlength{\tabcolsep}{3pt}
\begin{adjustbox}{max width=\linewidth}
\begin{tabular}{lrrrrrrr}
\toprule\benchheader\midrule
\csname @@input\endcsname tables/main_06b_lr3_max_length_hit_rate_rows.tex
\bottomrule
\end{tabular}
\end{adjustbox}
\end{table}

%% file: appendices/scaleup_results.tex
\clearpage
\subsection{1.7B Student at the Reference Learning Rate}
\label{app:scaleup}

\begin{table}[htbp]
\centering
\caption{Qwen3-1.7B-Base student after distillation: individual Local models, Avg@8 ($\uparrow$).}
\label{tab:full17_avg_at_8}
\small\setlength{\tabcolsep}{3pt}
\begin{adjustbox}{max width=\linewidth}
\begin{tabular}{lrrrrrrr}
\toprule\benchheader\midrule
\csname @@input\endcsname tables/scaleup_17b_avg_at_8_rows.tex
\bottomrule
\end{tabular}
\end{adjustbox}
\end{table}
\begin{table}[htbp]
\centering
\caption{Qwen3-1.7B-Base student after distillation: individual Local models, Pass@8 ($\uparrow$).}
\label{tab:full17_pass_at_8}
\small\setlength{\tabcolsep}{3pt}
\begin{adjustbox}{max width=\linewidth}
\begin{tabular}{lrrrrrrr}
\toprule\benchheader\midrule
\csname @@input\endcsname tables/scaleup_17b_pass_at_8_rows.tex
\bottomrule
\end{tabular}
\end{adjustbox}
\end{table}
\begin{table}[htbp]
\centering
\caption{Qwen3-1.7B-Base student after distillation: cap-hit rate ($\downarrow$).}
\label{tab:full17_max_length_hit_rate}
\small\setlength{\tabcolsep}{3pt}
\begin{adjustbox}{max width=\linewidth}
\begin{tabular}{lrrrrrrr}
\toprule\benchheader\midrule
\csname @@input\endcsname tables/scaleup_17b_max_length_hit_rate_rows.tex
\bottomrule
\end{tabular}
\end{adjustbox}
\end{table}

%% file: preprint.bbl
\begin{thebibliography}{23}
\providecommand{\natexlab}[1]{#1}
\providecommand{\url}[1]{\texttt{#1}}
\expandafter\ifx\csname urlstyle\endcsname\relax
  \providecommand{\doi}[1]{doi: #1}\else
  \providecommand{\doi}{doi: \begingroup \urlstyle{rm}\Url}\fi

\bibitem[Agarwal et~al.(2024)Agarwal, Vieillard, Zhou, Stanczyk, Ramos, Geist, and Bachem]{agarwal2024gkd}
Rishabh Agarwal, Nino Vieillard, Yongchao Zhou, Piotr Stanczyk, Sabela Ramos, Matthieu Geist, and Olivier Bachem.
\newblock On-policy distillation of language models: Learning from self-generated mistakes.
\newblock In \emph{International Conference on Learning Representations}, 2024.

\bibitem[Cai et~al.(2026)Cai, Cao, Lin, Luo, Xu, Yang, Liu, Yang, Zhao, Sun, Liu, and Fang]{cai2026foresee}
Yuchen Cai, Ding Cao, Liang Lin, Chunxi Luo, Xin Xu, Kai Yang, Weijie Liu, Saiyong Yang, Tianxiang Zhao, Guangzhong Sun, Guiquan Liu, and Junfeng Fang.
\newblock Learning to foresee: Unveiling the unlocking efficiency of on-policy distillation.
\newblock \emph{arXiv preprint arXiv:2605.11739}, 2026.

\bibitem[Fu et~al.(2026)Fu, He, Zuo, Huang, Zhang, Xiao, Qian, Luo, Gao, Wang, Liu, Ding, and Xiao]{fu2026oneexample}
Zixuan Fu, Bingxiang He, Yuxin Zuo, Haohuan Huang, Jinqian Zhang, Ruhang Xiao, Cheng Qian, Qinyu Luo, Huan-ang Gao, Yudong Wang, Zhiyuan Liu, Ning Ding, and Chaojun Xiao.
\newblock Rethinking on-policy distillation of large language models {II}: One training example.
\newblock \emph{arXiv preprint arXiv:2609.04172}, 2026.

\bibitem[Gu et~al.(2024)Gu, Dong, Wei, and Huang]{gu2024minillm}
Yuxian Gu, Li~Dong, Furu Wei, and Minlie Huang.
\newblock {MiniLLM}: Knowledge distillation of large language models.
\newblock In \emph{International Conference on Learning Representations}, 2024.

\bibitem[He et~al.(2024)He, Luo, Bai, Hu, Thai, Shen, Hu, Han, Huang, Zhang, Liu, Qi, Liu, and Sun]{he2024olympiadbench}
Chaoqun He, Renjie Luo, Yuzhuo Bai, Shengding Hu, Zhen~Leng Thai, Junhao Shen, Jinyi Hu, Xu~Han, Yujie Huang, Yuxiang Zhang, Jie Liu, Lei Qi, Zhiyuan Liu, and Maosong Sun.
\newblock {OlympiadBench}: A challenging benchmark for promoting {AGI} with olympiad-level bilingual multimodal scientific problems.
\newblock In \emph{Proceedings of the 62nd Annual Meeting of the Association for Computational Linguistics}, 2024.

\bibitem[Hendrycks et~al.(2021)Hendrycks, Burns, Kadavath, Arora, Basart, Tang, Song, and Steinhardt]{hendrycks2021math}
Dan Hendrycks, Collin Burns, Saurav Kadavath, Akul Arora, Steven Basart, Eric Tang, Dawn Song, and Jacob Steinhardt.
\newblock Measuring mathematical problem solving with the {MATH} dataset.
\newblock In \emph{NeurIPS Datasets and Benchmarks}, 2021.

\bibitem[Hou et~al.(2026)Hou, Zhang, Cai, and You]{hou2026what}
Zhinan Hou, Jiaqi Zhang, Xunliang Cai, and Keyou You.
\newblock What matters in on-policy distillation? a perspective on data efficiency and data selection.
\newblock \emph{arXiv preprint arXiv:2609.05198}, 2026.

\bibitem[Jhunjhunwala et~al.(2023)Jhunjhunwala, Wang, and Joshi]{jhunjhunwala2023fedexp}
Divyansh Jhunjhunwala, Shiqiang Wang, and Gauri Joshi.
\newblock {FedExP}: Speeding up federated averaging via extrapolation.
\newblock In \emph{International Conference on Learning Representations}, 2023.

\bibitem[Jin et~al.(2024)Jin, Yin, Chen, Sun, Zhang, Liu, and Liu]{jin2024performative}
Kun Jin, Tongxin Yin, Zhongzhu Chen, Zeyu Sun, Xueru Zhang, Yang Liu, and Mingyan Liu.
\newblock Performative federated learning: A solution to model-dependent and heterogeneous distribution shifts.
\newblock In \emph{Proceedings of the AAAI Conference on Artificial Intelligence}, volume~38, pp.\  12938--12946, 2024.

\bibitem[Jin et~al.(2026)Jin, Min, Yang, Wei, Zhou, Kadhe, Baracaldo, and Lee]{jin2026entropy}
Woogyeol Jin, Taywon Min, Yongjin Yang, Dennis Wei, Yi~Zhou, Swanand~Ravindra Kadhe, Nathalie Baracaldo, and Kimin Lee.
\newblock Entropy-aware on-policy distillation of language models.
\newblock \emph{arXiv preprint arXiv:2603.07079}, 2026.

\bibitem[Karimireddy et~al.(2020)Karimireddy, Kale, Mohri, Reddi, Stich, and Suresh]{karimireddy2020scaffold}
Sai~Praneeth Karimireddy, Satyen Kale, Mehryar Mohri, Sashank Reddi, Sebastian Stich, and Ananda~Theertha Suresh.
\newblock {SCAFFOLD}: Stochastic controlled averaging for federated learning.
\newblock In \emph{Proceedings of the 37th International Conference on Machine Learning}, volume 119 of \emph{Proceedings of Machine Learning Research}, pp.\  5132--5143, 2020.

\bibitem[Lewkowycz et~al.(2022)Lewkowycz, Andreassen, Dohan, Dyer, Michalewski, Ramasesh, Slone, Anil, Schlag, Gutman-Solo, Wu, Neyshabur, Gur-Ari, and Misra]{lewkowycz2022minerva}
Aitor Lewkowycz, Anders Andreassen, David Dohan, Ethan Dyer, Henryk Michalewski, Vinay Ramasesh, Ambrose Slone, Cem Anil, Imanol Schlag, Theo Gutman-Solo, Yuhuai Wu, Behnam Neyshabur, Guy Gur-Ari, and Vedant Misra.
\newblock Solving quantitative reasoning problems with language models.
\newblock In \emph{Advances in Neural Information Processing Systems}, volume~35, 2022.

\bibitem[Li et~al.(2026)Li, Zuo, He, Zhang, Xiao, Qian, Yu, Gao, Yang, Liu, and Ding]{li2026rethinking}
Yaxuan Li, Yuxin Zuo, Bingxiang He, Jinqian Zhang, Chaojun Xiao, Cheng Qian, Tianyu Yu, Huan-ang Gao, Wenkai Yang, Zhiyuan Liu, and Ning Ding.
\newblock Rethinking on-policy distillation of large language models: Phenomenology, mechanism, and recipe.
\newblock \emph{arXiv preprint arXiv:2604.13016}, 2026.

\bibitem[Mangold et~al.(2025)Mangold, Berthier, and Moulines]{mangold2025fedsarsa}
Paul Mangold, Elo{\"i}se Berthier, and Eric Moulines.
\newblock Convergence guarantees for federated {SARSA} with local training and heterogeneous agents.
\newblock \emph{arXiv preprint arXiv:2512.17688}, 2025.

\bibitem[McMahan et~al.(2017)McMahan, Moore, Ramage, Hampson, and Arcas]{mcmahan2017fedavg}
Brendan McMahan, Eider Moore, Daniel Ramage, Seth Hampson, and Blaise Aguera~y Arcas.
\newblock Communication-efficient learning of deep networks from decentralized data.
\newblock In \emph{Proceedings of the 20th International Conference on Artificial Intelligence and Statistics}, volume~54 of \emph{Proceedings of Machine Learning Research}, pp.\  1273--1282, 2017.

\bibitem[Perdomo et~al.(2020)Perdomo, Zrnic, Mendler-D{\"u}nner, and Hardt]{perdomo2020performative}
Juan Perdomo, Tijana Zrnic, Celestine Mendler-D{\"u}nner, and Moritz Hardt.
\newblock Performative prediction.
\newblock In \emph{Proceedings of the 37th International Conference on Machine Learning}, volume 119 of \emph{Proceedings of Machine Learning Research}, pp.\  7599--7609, 2020.

\bibitem[Ross et~al.(2011)Ross, Gordon, and Bagnell]{ross2011dagger}
St{\'e}phane Ross, Geoffrey Gordon, and Drew Bagnell.
\newblock A reduction of imitation learning and structured prediction to no-regret online learning.
\newblock In \emph{Proceedings of the Fourteenth International Conference on Artificial Intelligence and Statistics}, volume~15 of \emph{Proceedings of Machine Learning Research}, pp.\  627--635, 2011.

\bibitem[Schulman et~al.(2015)Schulman, Levine, Abbeel, Jordan, and Moritz]{schulman2015trpo}
John Schulman, Sergey Levine, Pieter Abbeel, Michael Jordan, and Philipp Moritz.
\newblock Trust region policy optimization.
\newblock In \emph{Proceedings of the 32nd International Conference on Machine Learning}, volume~37 of \emph{Proceedings of Machine Learning Research}, pp.\  1889--1897, 2015.

\bibitem[Singhal et~al.(2025)Singhal, Ponkshe, and Vepakomma]{singhal2025fedexlora}
Raghav Singhal, Kaustubh Ponkshe, and Praneeth Vepakomma.
\newblock {FedEx-LoRA}: Exact aggregation for federated and efficient fine-tuning of large language models.
\newblock In \emph{Proceedings of the 63rd Annual Meeting of the Association for Computational Linguistics (Volume 1: Long Papers)}, pp.\  1316--1336, 2025.

\bibitem[Sun et~al.(2024)Sun, Li, Li, and Ding]{sun2024ffa}
Youbang Sun, Zitao Li, Yaliang Li, and Bolin Ding.
\newblock Improving {LoRA} in privacy-preserving federated learning.
\newblock In \emph{International Conference on Learning Representations}, 2024.

\bibitem[Wang et~al.(2024)Wang, Shen, He, Sun, Wang, Lyu, and Li]{wang2024flora}
Ziyao Wang, Zheyu Shen, Yexiao He, Guoheng Sun, Hongyi Wang, Lingjuan Lyu, and Ang Li.
\newblock {FLoRA}: Federated fine-tuning large language models with heterogeneous low-rank adaptations.
\newblock In \emph{Advances in Neural Information Processing Systems}, volume~37, 2024.

\bibitem[Yang et~al.(2025)Yang, Li, Yang, Zhang, Hui, Zheng, Yu, Gao, Huang, Lv, et~al.]{yang2025qwen3}
An~Yang, Anfeng Li, Baosong Yang, Beichen Zhang, Binyuan Hui, Bo~Zheng, Bowen Yu, Chang Gao, Chengen Huang, Chenxu Lv, et~al.
\newblock {Qwen3} technical report.
\newblock \emph{arXiv preprint arXiv:2505.09388}, 2025.

\bibitem[Zhang et~al.(2024)Zhang, Wang, Mitra, and Anderson]{zhang2024fedsarsa}
Chenyu Zhang, Han Wang, Aritra Mitra, and James Anderson.
\newblock Finite-time analysis of on-policy heterogeneous federated reinforcement learning.
\newblock In \emph{International Conference on Learning Representations}, 2024.

\end{thebibliography}
